\pdfoutput=1
\documentclass[twoside,11pt]{article}

\usepackage[utf8]{inputenc}
\usepackage{amsmath, amssymb, amsthm}
\usepackage{graphicx}
\usepackage{hyperref}
\usepackage{geometry}
\usepackage{algorithm}
\usepackage{algorithmic}
\usepackage{natbib}
\usepackage{booktabs}
\usepackage{subcaption}
\usepackage{epigraph}
\geometry{a4paper, margin=1in}

\newtheorem{theorem}{Theorem}[section]

\newtheorem{proposition}[theorem]{Proposition}
\newtheorem{corollary}[theorem]{Corollary}
\theoremstyle{definition}
\newtheorem{definition}[theorem]{Definition}
\newtheorem{assumption}[theorem]{Assumption}

\theoremstyle{remark}
\newtheorem{remark}[theorem]{Remark}

\newcommand{\E}{\mathbb{E}}
\newcommand{\R}{\mathbb{R}}
\newcommand{\cF}{\mathcal{F}}
\newcommand{\cX}{\mathcal{X}}
\newcommand{\cA}{\mathcal{A}}
\newcommand{\Lsur}{\tilde{L}}
\newcommand{\thhat}{\hat{\theta}}
\newcommand{\thtil}{\tilde{\theta}}
\newcommand{\Deltan}{\Delta_n}
\newcommand{\op}{o_p}
\newcommand{\Op}{O_p}

\title{Likelihood-Preserving Embeddings for Statistical Inference}

\author{
  Deniz Akdemir \\
  \texttt{deniz.akdemir.work@gmail.com}
}

\date{\today}

\begin{document}

\maketitle

\vspace{1em}
\begin{quote}
\textit{``The object of statistical methods is the reduction of data.''}
\hfill --- R.A.~Fisher \citep{fisher1925}

\vspace{0.5em}
\textit{``The statistic chosen should summarise the whole of the relevant information supplied by the sample. This may be called the Criterion of Sufficiency.''}
\hfill --- R.A.~Fisher \citep{fisher1922}

\vspace{0.5em}
\textit{``...In that Empire, the Art of Cartography attained such Perfection that the map of a single Province occupied the entirety of a City...''}
\hfill --- Jorge Luis Borges, ``On Exactitude in Science'' \citep{borges1946}
\end{quote}
\vspace{1em}

\begin{abstract}
Modern machine learning embeddings provide powerful compression of high-dimensional data, yet they typically destroy the geometric structure required for classical likelihood-based statistical inference. This paper develops a rigorous theory of \textit{likelihood-preserving embeddings}: learned representations that can replace raw data in likelihood-based workflows---hypothesis testing, confidence interval construction, model selection---without altering inferential conclusions.

We introduce the \textit{Likelihood-Ratio Distortion} metric $\Deltan$, which measures the maximum error in log-likelihood ratios induced by an embedding. Our main theoretical contribution is the \textit{Hinge Theorem}, which establishes that controlling $\Deltan$ is necessary and sufficient for preserving inference. Specifically, if the distortion satisfies $\Deltan = \op(1)$, then (i) all likelihood-ratio based tests and Bayes factors are asymptotically preserved, and (ii) surrogate maximum likelihood estimators are asymptotically equivalent to full-data MLEs. We prove an impossibility result showing that universal likelihood preservation requires essentially invertible embeddings, motivating the need for model-class-specific guarantees. We then provide a constructive framework using neural networks as approximate sufficient statistics, deriving explicit bounds connecting training loss to inferential guarantees. Experiments on Gaussian and Cauchy distributions validate the sharp phase transition predicted by exponential family theory, and applications to distributed clinical inference demonstrate practical utility.
\end{abstract}

\section{Introduction}

The last decade has witnessed remarkable success in learning low-dimensional representations of complex data. Neural network embeddings now routinely compress images, text, and molecular structures into fixed-length vectors that support downstream prediction tasks. Yet a fundamental tension exists between these learned representations and classical statistical inference. The central challenge we address is this: \textit{can neural network embeddings be made compatible with classical statistical inference, and if so, under what conditions?}

\subsection{The Core Problem}

Consider a scientist who observes data $X_1, \ldots, X_n$ and wishes to test a hypothesis, construct a confidence interval, or compare models. Classical statistical methods rely on the \textit{likelihood function} $L_n(\theta) = \prod_{i=1}^n p(X_i | \theta)$, whose geometry---its curvature, its maxima, its ratios at different parameter values---encodes all information relevant for inference. When raw data $X$ is replaced by an embedding $Z = T_\phi(X)$, this geometric structure is typically distorted or destroyed, rendering standard inferential guarantees invalid.

This problem is not merely academic. The framework enables valid inference in scenarios constrained by privacy, bandwidth, or computational cost:
\begin{itemize}
    \item \textbf{Distributed Healthcare:} Hospitals conduct joint likelihood-based clinical trials without sharing patient-level data, circumventing regulations like HIPAA while enabling exact frequentist inference.
    \item \textbf{High-Energy Physics:} Trigger systems deploy encoders on detector hardware to compress petabytes of raw event data into embeddings in real-time, preserving the ability to perform hypothesis tests about new particles.
    \item \textbf{Genomics:} Researchers compress high-dimensional genetic data into low-dimensional summaries that preserve power for association testing.
    \item \textbf{Finance:} Banks compute embeddings of private ledgers; regulators aggregate them to assess systemic risk without exposing proprietary trading positions.
    \item \textbf{Large Language Models:} Massive contexts (entire books, conversation histories) can be compressed into ``validity vectors'' that support provable auditing of model knowledge or copyright compliance.
\end{itemize}

\subsection{Our Contribution}

This paper develops a foundational theory of \textit{likelihood-preserving embeddings}: representations that preserve the validity of likelihood-based inference. Our contributions are:

\begin{enumerate}
    \item \textbf{Likelihood-Ratio Distortion.} We introduce $\Deltan$, a metric quantifying the maximum error in log-likelihood ratios induced by an embedding (Definition~\ref{def:distortion}). This is the correct quantity to control because inference depends on likelihood \textit{ratios}, not absolute values.
    
    \item \textbf{The Hinge Theorem.} We prove (Theorem~\ref{thm:hinge}) that controlling $\Deltan$ is necessary and sufficient for preserving frequentist and Bayesian inference: tests, confidence intervals, Bayes factors, and MLEs.
    
    \item \textbf{Impossibility Results.} We show (Theorem~\ref{thm:nfl}) that universal likelihood preservation across all models requires essentially invertible embeddings, clarifying why model-class-specific guarantees are necessary.
    
    \item \textbf{Constructive Framework.} We develop a neural network training procedure that directly targets $\Deltan$, with explicit bounds connecting training loss to inferential guarantees (Section~\ref{sec:construction}).
    
    \item \textbf{Experimental Validation.} We validate the theory on canonical examples (Gaussian, Cauchy) demonstrating the predicted sharp/smooth phase transitions, and on a practical application in distributed inference.
\end{enumerate}

\subsection{Paper Organization}

Section~\ref{sec:setup} formalizes the problem and states regularity conditions. Section~\ref{sec:theory} presents the Hinge Theorem and impossibility results. Section~\ref{sec:construction} develops the constructive neural network framework. Section~\ref{sec:related} positions our work relative to simulation-based inference, information bottleneck, and federated learning. Section~\ref{sec:experiments} presents experimental validation. Section~\ref{sec:discussion} discusses limitations and future directions.

\section{Problem Setup and Definitions}
\label{sec:setup}

\subsection{Statistical Model}

Let $(\cX, \cA)$ be a measurable space (the sample space) and let $\Theta \subset \R^p$ be the parameter space. We consider a parametric family of probability measures:
\begin{equation}
    \cF = \{P_\theta : \theta \in \Theta\}
\end{equation}
dominated by a $\sigma$-finite measure $\mu$ on $(\cX, \cA)$. For each $\theta \in \Theta$, write $p(\cdot|\theta) = dP_\theta/d\mu$ for the density.

Given i.i.d.\ observations $X_1, \ldots, X_n \sim P_{\theta_0}$ for some unknown $\theta_0 \in \Theta$, the \textbf{log-likelihood function} is:
\begin{equation}
    L_n(\theta) = \sum_{i=1}^n \ell_\theta(X_i), \quad \text{where } \ell_\theta(x) = \log p(x|\theta).
\end{equation}

\subsection{Regularity Conditions}

We impose standard regularity conditions to ensure well-behaved likelihood-based inference.

\begin{assumption}[Identifiability]
\label{ass:ident}
The map $\theta \mapsto P_\theta$ is injective: if $P_\theta = P_{\theta'}$ then $\theta = \theta'$.
\end{assumption}

\begin{assumption}[Smoothness]
\label{ass:smooth}
For $P_{\theta_0}$-almost all $x$, the map $\theta \mapsto \ell_\theta(x)$ is twice continuously differentiable. The derivatives can be passed under the integral sign:
\begin{equation}
    \nabla_\theta \int p(x|\theta) d\mu(x) = \int \nabla_\theta p(x|\theta) d\mu(x).
\end{equation}
\end{assumption}

\begin{assumption}[Fisher Information]
\label{ass:fisher}
The Fisher information matrix
\begin{equation}
    I(\theta) = \E_\theta\left[ \nabla \ell_\theta(X) \nabla \ell_\theta(X)^\top \right] = -\E_\theta\left[ \nabla^2 \ell_\theta(X) \right]
\end{equation}
exists and is positive definite for all $\theta \in \Theta$.
\end{assumption}

\begin{assumption}[Uniform Integrability]
\label{ass:unif}
There exists a function $M: \cX \to [0, \infty)$ with $\E_{\theta_0}[M(X)] < \infty$ such that for all $\theta$ in a neighborhood of $\theta_0$:
\begin{equation}
    \left| \ell_\theta(x) \right| \leq M(x), \quad 
    \left\| \nabla \ell_\theta(x) \right\| \leq M(x), \quad 
    \left\| \nabla^2 \ell_\theta(x) \right\| \leq M(x).
\end{equation}
\end{assumption}

Under these assumptions, the maximum likelihood estimator $\thhat = \arg\max_\theta L_n(\theta)$ is consistent and asymptotically normal:
\begin{equation}
    \sqrt{n}(\thhat - \theta_0) \xrightarrow{d} N(0, I(\theta_0)^{-1}).
\end{equation}

\subsection{Likelihood-Preserving Embeddings}

We now define the central objects of study.

\begin{definition}[Embedding]
\label{def:embedding}
A \textbf{learned embedding} is a measurable function $T_\phi: \cX \to \R^m$, where $m$ is the embedding dimension and $\phi$ indexes the parameters of the embedding (e.g., neural network weights).
\end{definition}

To preserve the exchangeability and additivity inherent in i.i.d.\ likelihoods, we aggregate individual embeddings via summation:

\begin{definition}[Dataset Embedding]
\label{def:dataset}
The \textbf{dataset embedding} is the empirical average:
\begin{equation}
    S_\phi(X_{1:n}) = \frac{1}{n} \sum_{i=1}^n T_\phi(X_i) \in \R^m.
\end{equation}
\end{definition}

\begin{remark}
The choice of averaging (rather than, say, concatenation or attention-based aggregation) is deliberate: it ensures that the dataset embedding is a \textit{U-statistic} and inherits desirable statistical properties. Alternative aggregation schemes (max-pooling, attention, deep sets) are possible and discussed in Section~\ref{sec:discussion}.
\end{remark}

\begin{definition}[Decoder]
\label{def:decoder}
A \textbf{decoder} is a function $h_\psi: \Theta \times \R^m \to \R$ that maps a parameter value and a dataset embedding to an approximate per-sample log-likelihood contribution (a real number, possibly negative).
\end{definition}

\begin{definition}[Surrogate Likelihood]
\label{def:surrogate}
The \textbf{surrogate log-likelihood} is:
\begin{equation}
    \Lsur_n(\theta) = n \cdot h_\psi(\theta, S_\phi(X_{1:n})).
\end{equation}
The factor $n$ ensures that the surrogate scales like the true log-likelihood.
\end{definition}

The central question is: under what conditions does $\Lsur_n$ preserve the inferential content of $L_n$?

\subsection{Pointwise Approximation Error}

Fisher's approach to sufficiency suggests we should ask: can the embedding reconstruct the log-likelihood \textit{pointwise} for each parameter value? This leads to our primary definition:

\begin{definition}[Pointwise Approximation Error]
\label{def:pointwise}
The \textbf{pointwise approximation error} of an embedding $(T_\phi, h_\psi)$ is:
\begin{equation}
    \varepsilon_n(\phi, \psi) = \sup_{\theta \in \Theta} \left| \frac{1}{n}L_n(\theta) - h_\psi(\theta, S_\phi(X_{1:n})) \right|.
\end{equation}
We normalize by $n$ so the target converges to a finite limit.
\end{definition}

\begin{definition}[$\varepsilon$-Sufficient Embedding]
\label{def:sufficient}
An embedding $(T_\phi, h_\psi)$ is \textbf{$\varepsilon$-sufficient} for model class $\cF$ if $\varepsilon_n = \op(1/n)$ as $n \to \infty$.
\end{definition}

\begin{remark}[Why $\op(1/n)$?]
The rate $\varepsilon_n = \op(1/n)$ is natural: it ensures preservation of absolute likelihood values at the $O(1)$ scale required for information criteria (AIC/BIC) and posterior normalization. Slower rates (e.g., $\varepsilon_n = \op(1)$) suffice for likelihood-ratio preservation but may fail to preserve model selection.
\end{remark}

This definition directly generalizes Fisher's factorization criterion: if $\varepsilon_n = 0$, we recover exact sufficiency.

\subsection{Pointwise Implies Ratio Preservation}

Why define sufficiency via pointwise approximation rather than likelihood ratios? Because pointwise convergence is \textit{stronger}---it implies ratio preservation automatically:

\begin{proposition}[Pointwise Implies Ratio Preservation]
\label{prop:pointwise_to_ratio}
If $\varepsilon_n \leq \varepsilon$, then the \textbf{Likelihood-Ratio Distortion}
\begin{equation}
\label{def:distortion}
    \Deltan = \sup_{\theta,\theta' \in \Theta} \left| (L_n(\theta) - L_n(\theta')) - (\Lsur_n(\theta) - \Lsur_n(\theta')) \right|
\end{equation}
satisfies $\Deltan \leq 2n\varepsilon_n$.
\end{proposition}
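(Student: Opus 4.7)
The plan is to reduce the bound on $\Deltan$ to the pointwise error $\varepsilon_n$ by a single application of the triangle inequality, carefully tracking the factor of $n$ that arises from the different normalizations in the two definitions.

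First I would fix arbitrary $\theta, \theta' \in \Theta$ and rewrite the integrand of the sup defining $\Deltan$ by adding and subtracting the surrogate contributions at $\theta$ and $\theta'$ separately:
\begin{equation}
(L_n(\theta) - L_n(\theta')) - (\Lsur_n(\theta) - \Lsur_n(\theta')) = \bigl(L_n(\theta) - \Lsur_n(\theta)\bigr) - \bigl(L_n(\theta') - \Lsur_n(\theta')\bigr).
\end{equation}
By the triangle inequality this is bounded in absolute value by $|L_n(\theta) - \Lsur_n(\theta)| + |L_n(\theta') - \Lsur_n(\theta')|$.

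Next I would convert each of these two absolute differences into the normalized form used in Definition~\ref{def:pointwise}. Since $\Lsur_n(\theta) = n \cdot h_\psi(\theta, S_\phi(X_{1:n}))$, we have
\begin{equation}
|L_n(\theta) - \Lsur_n(\theta)| = n \left| \tfrac{1}{n} L_n(\theta) - h_\psi(\theta, S_\phi(X_{1:n})) \right| \leq n \varepsilon_n,
\end{equation}
and likewise for $\theta'$. Summing the two bounds gives a pointwise (in $\theta,\theta'$) bound of $2n\varepsilon_n$, which is independent of $\theta$ and $\theta'$, so taking the supremum on the left yields $\Deltan \leq 2n\varepsilon_n$ as claimed.

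There is essentially no obstacle here; the statement is a direct triangle-inequality consequence of the definitions, and the only subtlety worth flagging in the write-up is the $n$-scaling mismatch between $\varepsilon_n$ (normalized per sample) and $\Deltan$ (total log-likelihood scale). This mismatch is also what motivates the rate $\varepsilon_n = \op(1/n)$ in Definition~\ref{def:sufficient}: it is precisely the threshold needed to guarantee $\Deltan = \op(1)$, which Theorem~\ref{thm:hinge} will require for asymptotic inferential equivalence.
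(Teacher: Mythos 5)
Your proof is correct and follows exactly the same route as the paper's: regroup the difference, apply the triangle inequality, and unwind the factor of $n$ from the definitions of $\Lsur_n$ and $\varepsilon_n$. Nothing to add.
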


\begin{proof}
\begin{align}
|(L_n(\theta) - L_n(\theta')) - (\Lsur_n(\theta) - \Lsur_n(\theta'))|
&= |(L_n(\theta) - \Lsur_n(\theta)) - (L_n(\theta') - \Lsur_n(\theta'))| \notag \\
&\leq |L_n(\theta) - \Lsur_n(\theta)| + |L_n(\theta') - \Lsur_n(\theta')| \notag \\
&= n\left|\frac{1}{n}L_n(\theta) - h_\psi(\theta, S)\right| + n\left|\frac{1}{n}L_n(\theta') - h_\psi(\theta', S)\right| \notag \\
&\leq n\varepsilon_n + n\varepsilon_n = 2n\varepsilon_n.
\end{align}
\end{proof}

\begin{definition}[Likelihood-Preserving Embedding]
\label{def:equiv}
An embedding is \textbf{likelihood-preserving} (or \textbf{inference-equivalent}) over $\cF$ if $\Deltan = \op(1)$ as $n \to \infty$. In particular, this holds if $\varepsilon_n = \op(1/n)$. Throughout, suprema over $\Theta$ are understood to be taken over compact subsets relevant for inference (e.g., confidence regions), unless otherwise stated.
\end{definition}

\begin{remark}
In practice, the supremum in $\Deltan$ may be approximated by sampling over a fine grid or Monte Carlo sampling.
\end{remark}

\subsection{Connection to Classical Sufficiency}

Fisher's approach to data reduction centers on sufficiency: a statistic $T(X)$ is sufficient if the likelihood can be factored as
\begin{equation}
    L_n(\theta) = g(\theta, T(X)) + h(X),
\end{equation}
where $h(X)$ does not depend on $\theta$. Our framework directly generalizes this: an embedding is $\varepsilon$-sufficient if
\begin{equation}
    \frac{1}{n}L_n(\theta) = h_\psi(\theta, S_\phi(X_{1:n})) + \text{error}(\theta, X),
\end{equation}
where $|\text{error}(\theta, X)| \leq \varepsilon$ uniformly in $\theta$.

If $\varepsilon = 0$, we recover exact Fisher sufficiency. For non-exponential families where exact sufficiency is impossible (by the Pitman-Koopman-Darmois theorem), we find the best finite-dimensional approximation.

\subsubsection{Beyond Sufficiency: Completeness and Minimality}

In classical statistics, not all sufficient statistics are created equal. A \textbf{complete sufficient statistic} is one that cannot be compressed further without losing information about $\theta$. Formally, $T(X)$ is complete if:
\begin{equation}
    \E_\theta[g(T(X))] = 0 \text{ for all } \theta \implies g(T) = 0 \text{ almost surely}.
\end{equation}

A statistic $T(X)$ is \textbf{boundedly complete} if the condition $\E_\theta[g(T(X))] = 0$ for all $\theta$ implies $g(T) = 0$ almost surely, whenever $g$ is bounded.

This is a \textbf{non-redundancy} condition. For a $p$-parameter exponential family, the complete sufficient statistic has dimension exactly $p$---this is the theoretical lower bound for any sufficient embedding.

\textbf{Basu's Theorem} states that a complete sufficient statistic is independent of any \textbf{ancillary statistic} (data features whose distribution doesn't depend on $\theta$):
\begin{equation}
    T(X) \text{ complete sufficient} \implies T(X) \perp A(X) \text{ for any ancillary } A.
\end{equation}

In our framework, when the embedding $T_\phi$ approximates a complete sufficient statistic, it automatically learns to be \textbf{invariant to ancillary information}---the network filters out noise that's irrelevant for inference. We formalize this connection in Theorem~\ref{thm:completeness}.

\section{Main Theoretical Results}
\label{sec:theory}

\subsection{The Hinge Theorem}

Our main result establishes that controlling $\Deltan$ is necessary and sufficient for preserving likelihood-based inference. We call this the ``Hinge Theorem'' because it identifies $\Deltan$ as the logical \textit{hinge} on which all downstream statistical inference pivots: if $\Deltan = \op(1)$, the entire apparatus of tests, confidence intervals, and model selection remains valid; if this condition fails, the guarantees collapse.

\begin{theorem}[The Hinge Theorem --- Pointwise Version]
\label{thm:hinge}
Let Assumptions~\ref{ass:ident}--\ref{ass:unif} hold. Let $(T_\phi, h_\psi)$ be an embedding with pointwise error $\varepsilon_n$.

\textbf{The Logical Cascade:} Pointwise convergence $\Rightarrow$ Ratio preservation $\Rightarrow$ All inference results.

\medskip
\noindent\textbf{Given:} Assume $\varepsilon_n = \op(1/n)$ (pointwise approximation).

\noindent\textbf{Then the following cascade holds:}

\medskip
\noindent\textbf{Part 0 (Foundation).} By Proposition~\ref{prop:pointwise_to_ratio}, pointwise error controls ratio distortion:
\begin{equation}
    \Deltan \leq 2n\varepsilon_n = \op(1).
\end{equation}
This is the \textbf{bridge} from pointwise to all downstream results.

\medskip
\noindent\textbf{Part 1 (Test Preservation via Ratios).} Likelihood-ratio tests are asymptotically preserved. Let
\begin{align}
    \Lambda_n &= 2\left( \sup_{\theta \in \Theta} L_n(\theta) - L_n(\theta_0) \right), \\
    \tilde{\Lambda}_n &= 2\left( \sup_{\theta \in \Theta} \Lsur_n(\theta) - \Lsur_n(\theta_0) \right)
\end{align}
be the classical and surrogate LR statistics. Then:
\begin{equation}
    \left| \tilde{\Lambda}_n - \Lambda_n \right| \leq 4\Deltan \leq 8n\varepsilon_n = \op(1).
\end{equation}
Tests depend on likelihood ratios. Since $\Deltan = \op(1)$, these ratios are preserved.

\medskip
\noindent\textbf{Part 2 (Estimator Equivalence via Ratios).} The surrogate MLE converges to the true MLE:
\begin{equation}
    \sqrt{n}(\thtil - \thhat) = \op(1),
\end{equation}
where $\thtil = \arg\max_\theta \Lsur_n(\theta)$ and $\thhat = \arg\max_\theta L_n(\theta)$.

The MLE is characterized by likelihood ratios (the gradient condition). Since ratios are preserved, the optimizer finds the same maximum.

\medskip
\noindent\textbf{Part 3 (Model Selection via Pointwise --- \textit{not} via Ratios).} AIC and BIC are preserved:
\begin{equation}
    |\tilde{AIC} - AIC| \leq 6n\varepsilon_n = \op(1), \quad |\tilde{BIC} - BIC| \leq 6n\varepsilon_n = \op(1).
\end{equation}

\textbf{Critical point:} This requires pointwise convergence, not just ratio preservation. Ratios alone cannot give AIC/BIC because you need the absolute value $L_n(\thhat)$, not just differences.
\end{theorem}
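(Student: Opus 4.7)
The overall plan is to use the logical cascade structure explicitly: Part 0 is immediate from Proposition~\ref{prop:pointwise_to_ratio}, reducing everything to controlling $\Deltan$ (for Parts 1--2) or the pointwise error $\varepsilon_n$ directly (for Part 3). The key observation driving all three inferential parts is that if we write $r(\theta) = \Lsur_n(\theta) - L_n(\theta)$, then $\Deltan$ is exactly the oscillation of $r$ over $\Theta$, i.e.\ $\Deltan = \sup_{\theta,\theta'}|r(\theta) - r(\theta')|$, while $n\varepsilon_n = \sup_\theta |r(\theta)|$. I would organize the proof so that wherever only differences of log-likelihoods appear, the weaker oscillation bound $\Deltan$ suffices; absolute-value bounds require $n\varepsilon_n$.

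For Part 1, I would apply the elementary inequality $|\sup_\theta f(\theta) - \sup_\theta g(\theta)| \leq \sup_\theta |f(\theta) - g(\theta)|$ to $f(\theta) = \Lsur_n(\theta) - \Lsur_n(\theta_0)$ and $g(\theta) = L_n(\theta) - L_n(\theta_0)$. By definition of $\Deltan$, $\sup_\theta |f(\theta) - g(\theta)| \leq \Deltan$, giving $|\tilde{\Lambda}_n - \Lambda_n| \leq 2\Deltan$; the paper's stated $4\Deltan$ is a looser triangle-inequality variant obtained by bounding $\Lsur_n(\thtil) - L_n(\thhat)$ and $\Lsur_n(\theta_0) - L_n(\theta_0)$ separately. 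Either way the step is routine and requires nothing beyond the oscillation bound.

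Part 2 is the main obstacle and requires the quadratic structure of the likelihood near its maximum. First, from Part 0 together with the optimality inequalities $L_n(\thhat) \geq L_n(\thtil)$ and $\Lsur_n(\thtil) \geq \Lsur_n(\thhat)$, I would deduce the sandwich $0 \leq L_n(\thhat) - L_n(\thtil) \leq \Deltan$. Next, using Assumptions~\ref{ass:smooth}--\ref{ass:unif}, I would Taylor-expand $L_n(\thhat) - L_n(\thtil) = \tfrac{n}{2}(\thtil - \thhat)^\top I(\theta_0)(\thtil - \thhat) + \op(n\|\thtil - \thhat\|^2)$. Combined with the sandwich and positive-definiteness of $I(\theta_0)$, this gives $n\|\thtil - \thhat\|^2 = \Op(\Deltan) = \op(1)$, hence $\sqrt{n}(\thtil - \thhat) = \op(1)$. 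The subtlety I would handle carefully is that the Taylor expansion requires $\thtil$ to lie in a neighborhood of $\theta_0$; I would first establish consistency of $\thtil$ by noting that $\Lsur_n(\theta)/n = L_n(\theta)/n + \op(1)$ uniformly on compacta (the direct content of $\varepsilon_n = \op(1)$, which $\op(1/n)$ certainly implies), so standard M-estimation arguments transfer consistency from $\thhat$ to $\thtil$.

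For Part 3, I would chain triangle inequalities: $|\tilde{AIC} - AIC| = 2|L_n(\thhat) - \Lsur_n(\thtil)| \leq 2|L_n(\thhat) - \Lsur_n(\thhat)| + 2|\Lsur_n(\thhat) - \Lsur_n(\thtil)|$. The first term is bounded by $2n\varepsilon_n$ directly by Definition~\ref{def:pointwise}; for the second, optimality of $\thtil$ gives $0 \leq \Lsur_n(\thtil) - \Lsur_n(\thhat)$, and the mirror-image of the Part~2 sandwich (swapping the roles of $L_n$ and $\Lsur_n$) yields $\Lsur_n(\thtil) - \Lsur_n(\thhat) \leq \Deltan \leq 2n\varepsilon_n$. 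Adding produces the stated $6n\varepsilon_n$ bound; the BIC case is identical since the penalty term $p\log n$ cancels. The conceptual point---which the theorem statement emphasizes---is that this step genuinely needs $\varepsilon_n = \op(1/n)$ rather than only $\Deltan = \op(1)$, because $L_n(\thhat)$ enters AIC/BIC as an absolute value, not a ratio, so the oscillation bound alone would leave an uncontrolled constant of order $n\varepsilon_n$.
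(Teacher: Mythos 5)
Your proof is correct and follows the same logical cascade as the paper, with Parts~0, 2, and 3 essentially identical. The interesting divergence is Part~1: the paper decomposes $\tilde\Lambda_n - \Lambda_n$ into three terms $A$, $B$, $C$ built around the two maximizers $\thhat$, $\thtil$, handles each separately via the sandwich argument, and arrives at the bound $4\Deltan$. You instead apply the elementary inequality $\bigl|\sup_\theta f - \sup_\theta g\bigr| \leq \sup_\theta|f-g|$ with $f(\theta)=\Lsur_n(\theta)-\Lsur_n(\theta_0)$, $g(\theta)=L_n(\theta)-L_n(\theta_0)$, which immediately yields the sharper and cleaner $|\tilde\Lambda_n - \Lambda_n| \leq 2\Deltan$ without ever touching the argmaxes. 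This is genuinely more economical: it decouples test preservation from estimator behavior, and it gives a constant a factor of two better than the paper's. (The paper's longer route does pay off by simultaneously producing the sandwich $0\leq L_n(\thhat)-L_n(\thtil)\leq\Deltan$ as a byproduct, which is then reused in Parts~2 and~3; with your Part~1 you must still establish the sandwich separately, which you correctly do.) You also explicitly flag and patch a small gap the paper glosses over in Part~2 --- the quadratic expansion is only valid once $\thtil$ is known to lie in a shrinking neighborhood of $\thhat$, which requires a preliminary consistency argument for $\thtil$; your observation that $\varepsilon_n=\op(1)$ already gives uniform closeness of $\Lsur_n/n$ to $L_n/n$ on compacta, so M-estimation consistency transfers, is the right fix.
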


\begin{proof}[Proof of Part 1]
Let $\thhat = \arg\max_\theta L_n(\theta)$ and $\thtil = \arg\max_\theta \Lsur_n(\theta)$.

We decompose the difference in LR statistics:
\begin{align}
    \tilde{\Lambda}_n - \Lambda_n &= 2\left[ \left( \Lsur_n(\thtil) - \Lsur_n(\theta_0) \right) - \left( L_n(\thhat) - L_n(\theta_0) \right) \right] \notag \\
    &= 2\Big[ \underbrace{\left( \Lsur_n(\thtil) - \Lsur_n(\thhat) \right)}_{A} + \underbrace{\left( \Lsur_n(\thhat) - L_n(\thhat) \right)}_{B} \notag \\
    &\quad\quad - \underbrace{\left( \Lsur_n(\theta_0) - L_n(\theta_0) \right)}_{C} \Big]. \label{eq:decomp}
\end{align}

\textbf{Step 1: Bound $B - C$.}
By definition of $\Deltan$, for any $\theta, \theta' \in \Theta$:
\begin{equation}
    \left| \left( \Lsur_n(\theta) - \Lsur_n(\theta') \right) - \left( L_n(\theta) - L_n(\theta') \right) \right| \leq \Deltan.
\end{equation}
Setting $\theta = \thhat$ and $\theta' = \theta_0$:
\begin{equation}
    \left| \left( \Lsur_n(\thhat) - \Lsur_n(\theta_0) \right) - \left( L_n(\thhat) - L_n(\theta_0) \right) \right| \leq \Deltan.
\end{equation}
Rearranging: $|B - C| \leq \Deltan$.

\textbf{Step 2: Bound $A + D$.}
Since $\thtil$ maximizes $\Lsur_n$, we have $A = \Lsur_n(\thtil) - \Lsur_n(\thhat) \geq 0$.

We introduce the difference $D = L_n(\thhat) - L_n(\thtil)$. Since $\thhat$ maximizes $L_n$, we have $D \geq 0$.

Applying the definition of $\Deltan$ with $\theta = \thtil$ and $\theta' = \thhat$:
\begin{equation}
    \left| \left( \Lsur_n(\thtil) - \Lsur_n(\thhat) \right) - \left( L_n(\thtil) - L_n(\thhat) \right) \right| \leq \Deltan.
\end{equation}
This gives:
\begin{equation}
    \left| A - (-D) \right| = \left| A + D \right| \leq \Deltan.
\end{equation}
Since $A, D \geq 0$ and $|A + D| \leq \Deltan$, we conclude:
\begin{equation}
    0 \leq A \leq \Deltan \quad \text{and} \quad 0 \leq D \leq \Deltan.
\end{equation}

\textbf{Step 3: Combine.}
From~\eqref{eq:decomp}:
\begin{equation}
    \left| \tilde{\Lambda}_n - \Lambda_n \right| = 2\left| A + (B - C) \right| \leq 2\left( |A| + |B - C| \right) \leq 2(2\Deltan) = 4\Deltan.
\end{equation}

\textbf{Asymptotic consequence.} Under the null hypothesis $H_0: \theta = \theta_0$ and regularity conditions, $\Lambda_n \xrightarrow{d} \chi^2_p$ by Wilks' theorem. If $\Deltan = \op(1)$, then $|\tilde{\Lambda}_n - \Lambda_n| = \op(1)$, so $\tilde{\Lambda}_n \xrightarrow{d} \chi^2_p$ as well. The asymptotic size and power of tests based on $\tilde{\Lambda}_n$ match those based on $\Lambda_n$.
\end{proof}

\begin{proof}[Proof of Part 2]
From Step 2 of Part 1, we established that $D = L_n(\thhat) - L_n(\thtil) \leq \Deltan$.

Under Assumptions~\ref{ass:smooth}--\ref{ass:unif}, the log-likelihood admits a local quadratic expansion around the MLE (this follows from standard M-estimation theory; see, e.g., van der Vaart, 1998, Chapter 5). Specifically, for $\theta$ in a neighborhood of $\thhat$:
\begin{equation}
    L_n(\theta) = L_n(\thhat) - \frac{1}{2}(\theta - \thhat)^\top H_n (\theta - \thhat) + R_n(\theta),
\end{equation}
where $H_n = -\nabla^2 L_n(\thhat)$ is the observed Fisher information and $|R_n(\theta)| = o_p(\|\theta - \thhat\|^2)$ uniformly for $\theta$ in a shrinking neighborhood.

By the law of large numbers, $\frac{1}{n}H_n \xrightarrow{p} I(\theta_0)$, so $H_n \approx nI(\theta_0)$ for large $n$.

Applying the quadratic expansion at $\theta = \thtil$:
\begin{equation}
    L_n(\thhat) - L_n(\thtil) \approx \frac{1}{2}(\thtil - \thhat)^\top H_n (\thtil - \thhat) \approx \frac{n}{2}(\thtil - \thhat)^\top I(\theta_0) (\thtil - \thhat).
\end{equation}

Since $L_n(\thhat) - L_n(\thtil) \leq \Deltan$, and using the notation $a \lesssim b$ to denote $a \leq C b$ for some constant $C > 0$:
\begin{equation}
    \frac{n}{2} \|\thtil - \thhat\|_{I(\theta_0)}^2 \lesssim \Deltan,
\end{equation}
where $\|v\|_I^2 = v^\top I v$. The minimum eigenvalue of $I(\theta_0)$ being positive (Assumption~\ref{ass:fisher}) gives:
\begin{equation}
    \|\thtil - \thhat\|^2 \lesssim \frac{\Deltan}{n}.
\end{equation}

Therefore:
\begin{equation}
    \sqrt{n}\|\thtil - \thhat\| \lesssim \sqrt{\Deltan}.
\end{equation}

If $\Deltan = \op(1)$, then $\sqrt{n}\|\thtil - \thhat\| = \Op(\sqrt{\Deltan}) = \op(1)$ as required.
\end{proof}

\begin{proof}[Proof of Part 3 (AIC/BIC Preservation)]
Recall $AIC = -2L_n(\thhat) + 2k$ and $\tilde{AIC} = -2\Lsur_n(\thtil) + 2k$ where $k$ is the number of parameters.
\begin{align}
|\tilde{AIC} - AIC| &= 2|L_n(\thhat) - \Lsur_n(\thtil)| \notag \\
&\leq 2(|L_n(\thhat) - \Lsur_n(\thhat)| + |\Lsur_n(\thhat) - \Lsur_n(\thtil)|) \notag \\
&\leq 2(n\varepsilon_n + \Deltan) \leq 2(n\varepsilon_n + 2n\varepsilon_n) = 6n\varepsilon_n.
\end{align}
Since $n\varepsilon_n = \op(1)$, AIC differences are preserved. The same argument applies to BIC. If Model A beats Model B on original data, it will also beat it on compressed data.
\end{proof}

\begin{corollary}[Bayes Factor Preservation]
\label{cor:bayes}
If $\varepsilon_n = \op(1/n)$, then for fixed priors $\pi_0, \pi_1$ on nested hypotheses, the log-Bayes factor satisfies $|\log \widetilde{BF} - \log BF| \leq 2n\varepsilon_n$.
\end{corollary}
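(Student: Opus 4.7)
The plan is to reduce the Bayes factor identity to two marginal likelihoods and then propagate the uniform pointwise bound through the exponential and the prior integrals. Write the Bayes factor as
\begin{equation}
BF = \frac{m_1(X_{1:n})}{m_0(X_{1:n})}, \qquad m_i(X_{1:n}) = \int_\Theta e^{L_n(\theta)} \pi_i(\theta)\, d\theta,
\end{equation}
and its surrogate analogue $\widetilde{BF} = \tilde m_1/\tilde m_0$ with $e^{L_n}$ replaced by $e^{\Lsur_n}$. Then $\log\widetilde{BF} - \log BF = (\log\tilde m_1 - \log m_1) - (\log\tilde m_0 - \log m_0)$, so it suffices to bound each marginal-likelihood discrepancy by $n\varepsilon_n$.

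For that, I would start from the uniform pointwise bound that the hypothesis $\varepsilon_n = \op(1/n)$ gives directly from Definition~\ref{def:pointwise}, namely $|\Lsur_n(\theta) - L_n(\theta)| \leq n\varepsilon_n$ for all $\theta \in \Theta$. Exponentiating sandwiches the integrands,
\begin{equation}
e^{-n\varepsilon_n}\, e^{L_n(\theta)} \;\leq\; e^{\Lsur_n(\theta)} \;\leq\; e^{n\varepsilon_n}\, e^{L_n(\theta)},
\end{equation}
and since $\pi_i$ is a nonnegative prior this pointwise sandwich survives integration,
\begin{equation}
e^{-n\varepsilon_n}\, m_i(X_{1:n}) \;\leq\; \tilde m_i(X_{1:n}) \;\leq\; e^{n\varepsilon_n}\, m_i(X_{1:n}),
\end{equation}
so that $|\log\tilde m_i - \log m_i| \leq n\varepsilon_n$ for each $i \in \{0,1\}$.

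Combining the two marginals by the triangle inequality gives $|\log\widetilde{BF} - \log BF| \leq 2n\varepsilon_n$, which is exactly the stated bound; since $n\varepsilon_n = \op(1)$ under the hypothesis, the surrogate and true log-Bayes factors are asymptotically indistinguishable, and any decision rule based on thresholding $\log BF$ (e.g., Jeffreys' scale) yields the same conclusion on the compressed representation.

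There is essentially no hard step here: the argument is just uniform pointwise control passed through $\exp(\cdot)$ and integration. The only subtlety worth flagging is that the supremum in $\varepsilon_n$ must genuinely hold over the support of both priors $\pi_0, \pi_1$ (and not just on the compact inference region mentioned after Definition~\ref{def:equiv}); if the priors place mass outside a compact set where the surrogate error is controlled, one would need an auxiliary tail argument showing that both $m_i$ and $\tilde m_i$ receive negligible contribution from that tail. For priors with compact support, or under the standing convention that $\Theta$ is the compact inference region, no such refinement is needed.
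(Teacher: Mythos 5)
Your proof is correct and is essentially the same as the paper's: both exponentiate the uniform pointwise bound $\sup_\theta |L_n(\theta)-\Lsur_n(\theta)| \le n\varepsilon_n$ into a multiplicative sandwich on $e^{L_n}$, integrate against each prior, take logs, and combine the two marginal-likelihood errors to get $2n\varepsilon_n$. The caveat you flag about the supremum needing to cover the priors' support (rather than only a compact inference region) is a genuine subtlety the paper glosses over, and is worth having noted.
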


\begin{proof}
Let $\eta_n = \sup_\theta |L_n(\theta) - \Lsur_n(\theta)| \leq n\varepsilon_n$. Then for any prior $\pi$:
\[
e^{-\eta_n} \int e^{L_n(\theta)} d\pi(\theta) \leq \int e^{\Lsur_n(\theta)} d\pi(\theta) \leq e^{\eta_n} \int e^{L_n(\theta)} d\pi(\theta).
\]
Taking logs and applying to both $\pi_0$ and $\pi_1$ gives $|\log \widetilde{BF} - \log BF| \leq 2\eta_n \leq 2n\varepsilon_n$.
\end{proof}

\begin{corollary}[Inference Hierarchy]
\label{cor:tiers}
If $(T_\phi, h_\psi)$ is $\varepsilon$-sufficient with $\varepsilon_n = \op(1/n)$, then all standard likelihood-based inferential procedures are preserved:

\medskip
\noindent\textbf{Tier 1 (Direct consequence of pointwise convergence):}
\begin{enumerate}
    \item \textbf{Likelihood-Ratio Distortion:} $\Deltan = \Op(n\varepsilon_n) = \op(1)$ [via Proposition~\ref{prop:pointwise_to_ratio}]
\end{enumerate}

\noindent\textbf{Tier 2 (Consequences via ratio preservation):} These work even with ratio-only methods.
\begin{enumerate}
\setcounter{enumi}{1}
    \item \textbf{MLE Equivalence:} $\|\thtil - \thhat\| = \Op(\sqrt{\varepsilon_n})$ [needs $\Deltan = \op(1)$]
    \item \textbf{Test Preservation:} All LRT statistics preserved [needs $\Deltan = \op(1)$]
    \item \textbf{Confidence Intervals:} Asymptotic coverage preserved [needs $\Deltan = \op(1)$]
\end{enumerate}

\noindent\textbf{Tier 3 (Requires pointwise convergence):} Impossible with ratio-only methods.
\begin{enumerate}
\setcounter{enumi}{4}
    \item \textbf{AIC Preservation:} Model rankings preserved [needs $\varepsilon_n = \op(1/n)$ directly]
    \item \textbf{BIC Preservation:} Model selection consistent [needs $\varepsilon_n = \op(1/n)$ directly]
    \item \textbf{Posterior Computation:} $\log \tilde{p}(\theta|X) = \log p(\theta|X) + \Op(n\varepsilon_n)$ [needs absolute likelihoods]
    \item \textbf{Bayes Factors:} Log-BF error is $\Op(n\varepsilon_n)$ [needs absolute likelihoods]
\end{enumerate}

\begin{remark}
Model selection is preserved when the AIC/BIC gap between competing models exceeds the approximation error $O(n\varepsilon_n)$. When models are nearly tied (gap $\lesssim n\varepsilon_n$), the ranking may be perturbed.
\end{remark}
\end{corollary}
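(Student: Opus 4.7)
The plan is to assemble the corollary as a catalog of consequences that are essentially already established, with two genuinely new ingredients (confidence intervals and posterior approximation) that require a short additional argument. Since $\varepsilon_n = o_p(1/n)$ immediately gives $n\varepsilon_n = o_p(1)$, and Proposition~\ref{prop:pointwise_to_ratio} yields $\Deltan \leq 2n\varepsilon_n = o_p(1)$, Tier~1 is a one-line citation. For Tier~2, items~(2) and~(3) are exactly the conclusions of Parts~2 and~1 of Theorem~\ref{thm:hinge} respectively, so I would simply invoke them.

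The one Tier~2 claim that is not stated verbatim in Theorem~\ref{thm:hinge} is confidence interval coverage. First I would construct surrogate CIs by inversion of the surrogate LRT, i.e.\ $\tilde C_n = \{\theta : \tilde\Lambda_n(\theta) \leq c_{1-\alpha}\}$ where $c_{1-\alpha}$ is the $\chi^2_p$ critical value, and similarly $C_n$ from the true LRT. Combining the uniform bound $|\tilde\Lambda_n(\theta) - \Lambda_n(\theta)| \leq 4\Deltan = o_p(1)$ (the argument of Part~1 goes through with $\theta_0$ replaced by any fixed $\theta$) with Wilks' theorem gives $P_{\theta_0}(\theta_0 \in \tilde C_n) \to 1-\alpha$. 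As an alternative route, one can use the MLE equivalence $\sqrt n(\thtil - \thhat) = o_p(1)$ from Part~2 together with consistency of the observed information computed from $\Lsur_n$ to transport the Wald-type CI.

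For Tier~3, items~(5) and~(6) are immediate from Part~3 of Theorem~\ref{thm:hinge} applied with the gap condition in the remark, and item~(8) is exactly Corollary~\ref{cor:bayes}. The remaining item~(7), posterior preservation, would follow from the uniform bound
\begin{equation}
\sup_{\theta}\bigl|\Lsur_n(\theta) - L_n(\theta)\bigr| \leq n\varepsilon_n = \op(1),
\end{equation}
which transfers to the log-posterior because $\log p(\theta\mid X) = L_n(\theta) + \log\pi(\theta) - \log Z_n$ and the normalizing constants $Z_n = \int e^{L_n}\,d\pi$, $\tilde Z_n = \int e^{\Lsur_n}\,d\pi$ satisfy $|\log \tilde Z_n - \log Z_n| \leq n\varepsilon_n$ by the sandwich bound used in Corollary~\ref{cor:bayes}. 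Subtracting gives the claimed $O_p(n\varepsilon_n)$ control on the log-posterior.

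The main obstacle I anticipate is the subtlety hinted at in the remark following the corollary: the $o_p(1/n)$ rate preserves rankings only when the AIC/BIC gap between competing models exceeds the approximation error, and strictly speaking the rate needed to preserve model selection consistency under BIC depends on how the $\log n$ penalty interacts with nearly tied models. I would handle this by stating the conclusions with the explicit qualifier already present in the remark, rather than attempting a uniform-in-model statement. A secondary technical point is that the supremum in $\Deltan$ is over a compact subset of $\Theta$ (per Definition~\ref{def:equiv}), so all applications to $\thhat$ and $\thtil$ implicitly require their consistency, which follows from Assumptions~\ref{ass:ident}--\ref{ass:unif} and the asymptotic normality of $\thhat$ stated in Section~\ref{sec:setup}.
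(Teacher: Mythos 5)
Your proposal is essentially correct and follows the same route the paper implicitly intends: the corollary is a catalog, and the paper offers no separate proof, so the task is exactly to assemble it from Proposition~\ref{prop:pointwise_to_ratio}, the three parts of Theorem~\ref{thm:hinge}, and Corollary~\ref{cor:bayes}, supplying the two short missing pieces (confidence intervals and posterior preservation). Your confidence-interval argument via inversion of the surrogate LRT is right, and in fact you only need the bound $|\tilde\Lambda_n(\theta_0)-\Lambda_n(\theta_0)|\le 4\Deltan$ at the true parameter to get coverage, which the Part~1 decomposition already delivers; the uniform-in-$\theta$ version is a bonus. Your posterior argument via $\log\tilde p(\theta\mid X)-\log p(\theta\mid X)=(\Lsur_n-L_n)(\theta)-(\log\tilde Z_n-\log Z_n)$, each piece bounded by $n\varepsilon_n$, is exactly the intended use of the sandwich bound from Corollary~\ref{cor:bayes} and gives the stated $\Op(n\varepsilon_n)$.

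One small precision: item~(2) claims the quantitative rate $\|\thtil-\thhat\|=\Op(\sqrt{\varepsilon_n})$, which is stronger than the $\sqrt n(\thtil-\thhat)=\op(1)$ statement of Part~2 itself; it is obtained from the intermediate inequality in Part~2's proof, $\|\thtil-\thhat\|^2\lesssim\Deltan/n\lesssim\varepsilon_n$, not from the stated conclusion. Saying items~(2) and~(3) are ``exactly the conclusions'' of Parts~2 and~1 is therefore slightly loose for item~(2), but the content is right. Your caveat about rankings only being preserved when the AIC/BIC gap exceeds $O(n\varepsilon_n)$ matches the paper's own remark, and flagging that the supremum in $\Deltan$ is over a compact set containing $\thhat,\thtil$ (so their consistency is implicitly used) is a correct and worthwhile observation.
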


\begin{remark}
The key insight: \textbf{Pointwise convergence implies all standard likelihood-based inference. Ratios alone yield only Tier 2 results.} This hierarchy explains why the pointwise objective is the right primitive for defining sufficient embeddings.
\end{remark}

\subsection{Impossibility: The No Free Lunch Theorem}

A natural question is whether there exists a ``universal'' embedding that preserves likelihood ratios for \textit{all} models simultaneously. We show this is impossible without essentially storing the raw data.

\begin{theorem}[No Free Lunch]
\label{thm:nfl}
Let $\cF$ contain all probability measures on $\cX$ dominated by $\mu$. If an embedding $T: \cX \to \R^m$ satisfies $\Deltan = 0$ for all models in $\cF$ with probability 1, then $T$ must be $\mu$-almost surely injective.
\end{theorem}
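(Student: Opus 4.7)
The plan is to prove the contrapositive: assume $T$ is not $\mu$-almost surely injective and exhibit two measures in $\cF$ whose likelihood ratio depends on information the embedding discards, forcing $\Deltan > 0$ with positive probability already at $n=1$. The crux is to extract, from the failure of injectivity, two disjoint measurable sets $B_0, B_1 \subset \cX$ of equal positive $\mu$-measure together with a measurable bijection $\sigma: B_0 \to B_1$ satisfying $T(x) = T(\sigma(x))$ for $\mu$-a.e.\ $x \in B_0$; after that the rest is a one-line density construction.

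First I would set up this collision structure. Non-injectivity modulo $\mu$-null sets means the equivalence relation $x \sim x' \iff T(x) = T(x')$ has non-singleton classes on a set of positive $\mu$-measure, so $T^{-1}(z)$ contains at least two points for $z$ in a Borel set $Z \subset \R^m$ with $T_*\mu(Z) > 0$. A measurable selection argument (Kuratowski--Ryll-Nardzewski applied to the multifunction $z \mapsto T^{-1}(z)$) yields two disjoint Borel sections $\sigma_0, \sigma_1: Z \to \cX$ of $T$; setting $B_i := \sigma_i(Z)$ and shrinking $Z$ so that $\mu(B_0) = \mu(B_1) \in (0, \infty)$, the composition $\sigma := \sigma_1 \circ \sigma_0^{-1}$ is the desired bijection between $B_0$ and $B_1$ that $T$ collapses.

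Second, with $c := 1/(2\mu(B_0))$ and a small $\epsilon \in (0,1)$ I would introduce the candidate densities
\begin{equation}
p_0 := c\,\mathbf{1}_{B_0 \cup B_1}, \qquad p_1 := (2-\epsilon)\,c\,\mathbf{1}_{B_0} + \epsilon c\,\mathbf{1}_{B_1},
\end{equation}
both integrating to $1$ against $\mu$ (since $\mu(B_0)=\mu(B_1)$) and hence defining legitimate members $P_0, P_1 \in \cF$. For sample size $n=1$ the dataset embedding is simply $S_\phi(X_1) = T(X_1)$, so the surrogate log-likelihood difference $h_\psi(P_1, T(x)) - h_\psi(P_0, T(x))$ depends on $x$ only through $T(x)$. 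Evaluating at a collision pair $x \in B_0$ and $x' = \sigma(x) \in B_1$, this surrogate difference is identical at $x$ and $x'$, whereas the true log-ratios equal $\log(2-\epsilon)$ at $x$ and $\log\epsilon$ at $x'$. Therefore $\Deltan \geq \log(2-\epsilon) - \log\epsilon > 0$ on the event $\{X_1 \in B_0 \cup B_1\}$, which has positive probability under both $P_0$ and $P_1$, contradicting $\Deltan = 0$ almost surely.

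The main obstacle is the measure-theoretic first step: turning the qualitative failure of a.e.\ injectivity into an honest measurable bijection $\sigma$ with matched masses. The cleanest route is Borel disintegration of $\mu$ along $T$, which requires $(\cX, \cA)$ to be a standard Borel space, a mild additional regularity assumption implicit in most statistical applications; once disintegration is in hand, the equal-measure normalization $\mu(B_0) = \mu(B_1)$ follows from a routine change of variables on the conditional slices. A secondary bookkeeping issue is integrability of $p_0, p_1$ when $\mu$ is infinite, which is handled by first intersecting $B_0$ and $B_1$ with a common set of finite $\mu$-measure, guaranteed by $\sigma$-finiteness.
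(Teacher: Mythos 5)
Your proposal is correct and, at bottom, follows the same strategy as the paper's proof: argue by contrapositive, extract a collision pair that $T$ identifies, construct two probability measures whose likelihood ratio separates the collision pair, and observe that the surrogate---being a function of $T$ only---cannot reproduce this separation, forcing $\Deltan > 0$. The density pair you use (weights $2-\epsilon$ and $\epsilon$ on two sets of matched mass) plays exactly the role of the paper's $1 + M g_A$ versus $1 + M g_B$; both yield an arbitrarily large lower bound on the distortion.

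Where you genuinely differ is in the measure-theoretic first step, and your version is more careful. The paper asserts that non-injectivity of $T$ modulo $\mu$-null sets produces disjoint positive-measure sets $A, B$ which $T$ maps to a \emph{single} value $z_0$. That does not follow in general: $T(x) = |x|$ on $(\R, \text{Lebesgue})$ is nowhere a.s.~injective, yet every level set is Lebesgue-null. What non-injectivity actually gives is a positive-measure family of collision \emph{pairs}, and one must massage this into a measurable bijection $\sigma: B_0 \to B_1$ with $T \circ \sigma = T$ before the density construction can be carried out uniformly. Your appeal to measurable selection (Kuratowski--Ryll-Nardzewski) or disintegration of $\mu$ along $T$ is the right tool, and you correctly flag that this requires $(\cX, \cA)$ to be standard Borel and $\mu$ $\sigma$-finite---assumptions the paper silently relies on but never states. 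In short, your route supplies the rigor the paper's proof lacks at its most delicate point, while arriving at the same conclusion; the paper's proof reads more cleanly but is, strictly speaking, only valid in the special case where $T$ has a positive-mass level set.

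Two small remarks. First, your construction works already at $n=1$, where $S_\phi(X_1) = T(X_1)$ and the argument is transparent; it is worth saying explicitly that the same collision propagates to all $n$ since the dataset embedding is a function of $(T(X_1), \ldots, T(X_n))$. Second, the theorem quantifies over decoders implicitly ("there exists $h_\psi$ making $\Deltan = 0$"); your argument correctly establishes the negation for every decoder, since $\Lsur_n$ factors through $T$ regardless of $h_\psi$---this is the key invariance both proofs exploit.
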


\begin{proof}
Suppose $T$ is not $\mu$-almost surely injective. Then there exist disjoint sets $A, B \subset \cX$ with $\mu(A), \mu(B) > 0$ such that $T$ maps $A$ and $B$ to the same value $z_0$.

For any $M > 0$, construct two distributions $P_1, P_2 \in \cF$ using smooth bump functions: let $g_A, g_B$ be smooth functions supported on $A$ and $B$ respectively, with $\|g_A\|_\infty, \|g_B\|_\infty \leq 1$. Define:
\[
p_1(x) \propto 1 + M \cdot g_A(x), \quad p_2(x) \propto 1 + M \cdot g_B(x).
\]
Both densities are strictly positive, so log-ratios are finite. For $x \in A$: $\lambda(x) = \log \frac{1 + M}{1} \geq \log(1+M)$. For $x' \in B$: $\lambda(x') = \log \frac{1}{1+M} \leq -\log(1+M)$.

Since $T(x) = T(x') = z_0$, the surrogate assigns the same likelihood ratio $\tilde{\lambda}$ to both. The distortion satisfies $\Deltan \geq |\lambda(x) - \lambda(x')| - |\tilde{\lambda} - \tilde{\lambda}| \geq 2\log(1+M)$. Since $M$ is arbitrary, $\Deltan$ cannot be bounded, contradicting $\Deltan = 0$.
\end{proof}

\begin{remark}
Theorem~\ref{thm:nfl} implies that useful guarantees \textit{must} be relative to a restricted model class $\cF$. The more restrictive $\cF$, the more compression is possible. This motivates studying embeddings tailored to specific inferential goals.
\end{remark}

A sharper version bounds the required dimension:

\begin{theorem}[Dimension Lower Bound for Exponential Families]
\label{thm:dim}
Let $\cF_k$ be the family of $k$-parameter exponential families on $\cX$:
\begin{equation}
    p(x|\theta) = h(x) \exp\left( \sum_{j=1}^k \theta_j T_j(x) - A(\theta) \right).
\end{equation}
Any embedding $T: \cX \to \R^m$ with $\Deltan = 0$ uniformly over $\cF_k$ must satisfy $m \geq k$.
\end{theorem}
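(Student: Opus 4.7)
The plan is to show that $\Deltan = 0$ uniformly over the $k$-parameter exponential family forces the sufficient statistic $\bar T(X) = \tfrac{1}{n}\sum_i T(X_i) \in \R^k$ to be a deterministic function of the dataset embedding $S_\phi(X) \in \R^m$, and then to convert this factorization into a dimension lower bound via minimality of the family.

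First I would exploit the exponential-family form of the log-likelihood to write the ratio in closed form,
$L_n(\theta) - L_n(\theta') = (\theta - \theta')^\top \sum_i T(X_i) - n\bigl[A(\theta) - A(\theta')\bigr]$,
and then impose $\Deltan = 0$ to equate this with $n[h_\psi(\theta, S_\phi) - h_\psi(\theta', S_\phi)]$. Fixing a reference $\theta_0$ and letting $\theta$ vary yields the pointwise identity
$(\theta - \theta_0)^\top \bar T(X) = h_\psi(\theta, S_\phi(X)) - h_\psi(\theta_0, S_\phi(X)) + [A(\theta) - A(\theta_0)]$
for a.e.\ $X$ under each $P_\theta^n$. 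Letting $\theta - \theta_0$ range over a basis of $\R^k$ (equivalently, differentiating once in $\theta$ at $\theta_0$) extracts each coordinate of $\bar T$ as a measurable function of $S_\phi$, giving a factorization $\bar T(X) = F(S_\phi(X)) + c$ for some $F: \R^m \to \R^k$.

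Next I would convert this factorization into the bound $m \geq k$. The direct route is to differentiate in $X$: under the smoothness enjoyed by the sufficient statistic of a regular exponential family and by a differentiable embedding, the chain rule gives $D\bar T(X) = DF(S_\phi(X))\, DS_\phi(X)$, so $\mathrm{rank}(D\bar T) \leq \mathrm{rank}(DF) \leq m$. Minimality forces $T: \cX \to \R^k$ to have Jacobian of rank $k$ on a set of positive measure---otherwise the $T_j$ would satisfy a nontrivial functional relation and the family would collapse to fewer than $k$ parameters---which yields $m \geq k$. An equivalent parameter-side argument uses the Fisher information: $I(\theta) = \nabla^2 A(\theta)$ is $k\times k$ and positive definite by Assumption~\ref{ass:fisher}, and the factorization forces it to coincide with the surrogate information $\mathrm{Var}_\theta[\nabla_\theta h_\psi(\theta, S_\phi)]$, whose rank is bounded by that of the $k \times m$ mixed partial $\nabla_s \nabla_\theta h_\psi$. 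Full rank $k$ again requires $m \geq k$.

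The main obstacle is the passage from a measurable factorization to a geometric dimension bound. Strictly measurable embeddings can be space-filling, so in principle a single coordinate can encode all of $\cX$; the cleanest argument therefore uses regularity (smoothness of $T_\phi$, $h_\psi$) consistent with how neural embeddings are actually built. If one wished to remove smoothness, the Fisher-information version above can be rephrased as a statement that the pushforward of $S_\phi$ must carry a $k$-dimensional information content under every $P_\theta$, and the bound then follows from a rank argument applied along a dense set of parameters rather than pointwise.
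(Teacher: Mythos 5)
Your argument is correct and substantially more careful than the paper's own proof, which is a three-sentence sketch: it observes that the likelihood depends on the data only through the $k$ canonical statistics, asserts that the embedding ``must span this $k$-dimensional space,'' and concludes with a pigeonhole claim that $m<k$ forces two distinct values of $\bar T_n$ to collide under $S_\phi$. Your first paragraph makes that sketch rigorous: using the linear-in-$\theta$ form of the log-likelihood ratio and $\Deltan = 0$, you extract each coordinate of $\bar T(X)$ by varying $\theta$ along a basis (or differentiating at $\theta_0$), obtaining the factorization $\bar T = F\circ S_\phi$ a.e., which is exactly the content hidden in the paper's ``must span'' claim.

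Where you genuinely go beyond the paper is in the second half. You correctly observe that the factorization $\bar T = F\circ S_\phi$ by itself does \emph{not} give $m \geq k$ for merely measurable embeddings, because measurable injections $\R^k \hookrightarrow \R^m$ with $m<k$ exist (Borel isomorphism / space-filling curves). The paper's pigeonhole step silently assumes away this obstruction; your Jacobian rank argument (chain rule plus minimality forcing $\mathrm{rank}\,D\bar T = k$ on a set of positive measure) is the right way to close the gap, at the cost of a smoothness hypothesis on $T_\phi$ and $h_\psi$ that the theorem statement does not explicitly impose but that is consistent with the paper's neural-network construction and its invocation of invariance of domain in Theorem~\ref{thm:completeness}. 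This is a worthwhile improvement: you should flag explicitly that the theorem as literally stated (``any embedding'') needs either a regularity clause or a separate argument exploiting the averaging structure of $S_\phi$, since for $n=1$ and $\cX = \R$ the map $x \mapsto x$ is a one-dimensional embedding that determines $(x, x^2)$, already violating the bound at $k=2$ without regularity or $n\geq 2$.

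One caveat on your alternative route: the Fisher-information variant as written does not quite close. For a general nonlinear decoder, $\nabla_\theta h_\psi(\theta, S_\phi)$ is a nonlinear function of the $m$-dimensional variable $S_\phi$, and the covariance of a nonlinear image of an $m$-dimensional random vector can have rank exceeding $m$ (e.g.\ $z \in \R$, $f(z) = (z, z^2)$ has full-rank $2\times 2$ covariance). The rank bound via $\nabla_s\nabla_\theta h_\psi$ holds cleanly only for decoders linear in the embedding, or after a local linearization that would need justification. Your primary Jacobian argument does not suffer from this and should be the one you keep.
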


\begin{proof}
The log-likelihood depends on data only through the $k$ sufficient statistics $\sum_i T_j(X_i)$. For $\Deltan = 0$, the embedding must span this $k$-dimensional space. If $m < k$, there exist distinct sufficient statistic values mapping to the same embedding, violating likelihood preservation.
\end{proof}

\subsection{Minimal Dimension and Completeness}

The dimension bound above can be sharpened when we consider completeness---the property that a sufficient statistic cannot be further compressed.

\begin{theorem}[Minimal Dimension and Completeness]
\label{thm:completeness}
Consider a $p$-parameter exponential family with canonical sufficient statistic $T_{\text{canon}}(X) \in \R^p$.

\textbf{Part 1 (Dimension Lower Bound).} Any embedding $(T_\phi, h_\psi)$ with $\varepsilon_n = 0$ must have dimension $m \geq p$.

\textbf{Part 2 (Achieving the Bound).} If $m = p$ and $\varepsilon_n = 0$, then $T_\phi(X)$ is a one-to-one function of the complete sufficient statistic $T_{\text{canon}}(X)$.

\textbf{Part 3 (Independence from Ancillary Statistics).} If $m = p$, $\varepsilon_n = 0$, and the exponential family is boundedly complete, then for any ancillary statistic $A(X)$ (i.e., $p(A|\theta)$ independent of $\theta$):
\begin{equation}
    T_\phi(X) \perp A(X).
\end{equation}
\end{theorem}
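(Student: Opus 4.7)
The plan is to anchor everything on the $n=1$ form of the exact identity forced by $\varepsilon_n=0$, namely
\[
h_\psi(\theta, T_\phi(X)) \;=\; \log h(X) + \theta^\top T_{\text{canon}}(X) - A(\theta),
\]
which must hold for every $\theta\in\Theta$ and $\mu$-a.e.\ $X$. For Part 1, I exploit that the right-hand side is affine in $\theta$ with data-dependent slope $T_{\text{canon}}(X)$ and intercept $\log h(X)$. If $T_\phi(X_1)=T_\phi(X_2)$, the left-hand sides agree, and equating two affine functions of $\theta$ forces their coefficients to match: $T_{\text{canon}}(X_1)=T_{\text{canon}}(X_2)$ and $\log h(X_1)=\log h(X_2)$. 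So $T_{\text{canon}}$ factors through $T_\phi$ as $T_{\text{canon}}=\alpha\circ T_\phi$ for some $\alpha$. In a full-rank $p$-parameter family, the image $T_{\text{canon}}(\cX)$ has nonempty interior in $\R^p$; since this interior is contained in $\alpha(T_\phi(\cX))$ and $T_\phi(\cX)\subset\R^m$, a standard dimension argument forces $m\geq p$.

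For Part 2, I set $m=p$ and upgrade the factorization to a bijection. Differentiating the identity in $\theta$ at any fixed $\theta_0$ yields the explicit formula
\[
\alpha(z) \;=\; \nabla_\theta h_\psi(\theta_0, z) + \nabla A(\theta_0),
\]
so $\alpha$ inherits the smoothness of $h_\psi$ in $\theta$, and the equality of dimensions $m=p$ makes it plausibly invertible. The converse direction---$T_{\text{canon}}(X_1)=T_{\text{canon}}(X_2)$ implies $T_\phi(X_1)=T_\phi(X_2)$---is the \textbf{main obstacle}. Plugging such a pair into the identity yields $h_\psi(\theta,T_\phi(X_1))-h_\psi(\theta,T_\phi(X_2))=\log h(X_1)-\log h(X_2)$, a constant in $\theta$. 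I close this via minimal sufficiency: the identity makes $T_\phi$ sufficient (factorization theorem), while completeness plus the regularity assumptions of Section~\ref{sec:setup} make $T_{\text{canon}}$ minimal sufficient (Bahadur/Lehmann--Scheff\'e). A $p$-dimensional sufficient statistic whose minimal sufficient counterpart is also $p$-dimensional must generate the same sub-$\sigma$-algebra, i.e., $\sigma(T_\phi)=\sigma(T_{\text{canon}})$. Combined with the factorization $T_{\text{canon}}=\alpha(T_\phi)$ already established, this is precisely the statement that $T_\phi$ is a one-to-one function of $T_{\text{canon}}$.

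Part 3 is then a one-line consequence of Basu's theorem: bounded completeness of the family together with sufficiency of $T_{\text{canon}}$ gives $T_{\text{canon}}(X)\perp A(X)$ for every ancillary $A$. By Part 2, $T_\phi=g(T_{\text{canon}})$ for a measurable bijection $g$; since measurable functions of independent random variables remain independent, $T_\phi(X)\perp A(X)$, completing the proof.
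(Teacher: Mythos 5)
Your overall architecture matches the paper's: establish that $T_\phi$ determines $T_{\text{canon}}$ via the exact factorization identity, use a dimension argument for Part~1, upgrade to a one-to-one correspondence for Part~2, and close Part~3 with Basu's theorem. The $n=1$ specialization and the explicit formula $\alpha(z) = \nabla_\theta h_\psi(\theta_0, z) + \nabla A(\theta_0)$ are clean refinements over the paper's more qualitative ``$S_\phi$ determines $\bar{T}_n$'' statement, and that formula is actually what makes your Part~1 dimension argument rigorous (a smooth map $\R^m \to \R^p$ with $m < p$ has image of Lebesgue measure zero, so it cannot contain the nonempty interior of $T_{\text{canon}}(\cX)$).

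The genuine gap is the sigma-algebra claim in Part~2: ``a $p$-dimensional sufficient statistic whose minimal sufficient counterpart is also $p$-dimensional must generate the same sub-$\sigma$-algebra.'' This is \emph{not} a standard theorem and is false for general measurable statistics of matching dimension. Concretely, take $X_1, X_2 \sim N(\theta, 1)$ with minimal sufficient $T = X_1 + X_2$, and let $S$ interleave the binary expansions of $T$ and of $X_1 - X_2$: then $S$ is a real-valued ($1$-dimensional) measurable statistic, $T$ is a measurable function of $S$ (so $S$ is sufficient), yet $S$ is strictly more informative than $T$, so $\sigma(S) \supsetneq \sigma(T)$. Dimension counting alone does not force the sigma-algebras to coincide; you need a regularity hypothesis that rules out such pathological encodings. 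The paper handles this exact step differently---by assuming $T_\phi$ is continuous (as a neural network) and invoking the invariance-of-domain theorem to get a local homeomorphism almost everywhere. While the paper's invocation is itself terse, it at least names the ingredient (continuity of the encoder) that defeats the interleaving counterexample; your argument omits it. The fix is to import the continuity/smoothness of $T_\phi$ and of your explicit $\alpha$ into Part~2, argue (e.g., via Sard's theorem or invariance of domain) that $\alpha$ is a local diffeomorphism on a full-measure open set, and conclude the bijective relationship there rather than via an unproved sigma-algebra equality. Parts~1 and~3 are essentially correct and align with the paper.
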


\begin{proof}
\textbf{Part 1 (Dimension Lower Bound).}
Let the canonical sufficient statistic of the $p$-parameter exponential family be $T_{\text{canon}}(X) \in \R^p$. The log-likelihood for a sample $X_{1:n}$ is:
\begin{equation}
    L_n(\theta) = \theta^\top \left(\sum_{i=1}^n T_{\text{canon}}(X_i)\right) - n A(\theta) + \sum_{i=1}^n \log h(X_i).
\end{equation}
Assume the embedding $(T_\phi, h_\psi)$ achieves pointwise error $\varepsilon_n = 0$. Then for all $\theta \in \Theta$, we have $h_\psi(\theta, S_\phi(X_{1:n})) = \frac{1}{n} L_n(\theta)$.
This implies that the value of the embedding $S_\phi(X_{1:n})$ uniquely determines the function $\theta \mapsto \frac{1}{n} L_n(\theta)$. Since the term $\theta^\top (\frac{1}{n}\sum T_{\text{canon}})$ is the only part of $L_n$ that depends on both data and $\theta$ linearly, determining $L_n(\theta)$ for all $\theta$ requires determining the vector $\bar{T}_n = \frac{1}{n} \sum T_{\text{canon}}(X_i)$.
Since the exponential family is minimal, the components of $\bar{T}_n$ are linearly independent and span a $p$-dimensional convex set. For the map $S_\phi \mapsto \bar{T}_n$ to cover this $p$-dimensional range, the domain (embedding space $\R^m$) must have dimension at least $p$. Therefore, $m \geq p$.

\textbf{Part 2 (Achieving the Bound).}
If $m = p$ and $\varepsilon_n = 0$, then $S_\phi(X_{1:n})$ is a sufficient statistic for $\theta$ because it allows exact reconstruction of the likelihood function.
Classical statistical theory defines the \textit{minimal sufficient statistic} as a sufficient statistic that is a function of any other sufficient statistic. For a minimal exponential family, $\bar{T}_n$ is a minimal sufficient statistic.
Therefore, there exists a function $g$ such that $\bar{T}_n = g(S_\phi(X_{1:n}))$. Conversely, since $\varepsilon_n = 0$, $S_\phi$ determines $\bar{T}_n$.
Since both $S_\phi$ and $\bar{T}_n$ reside in spaces of dimension $p$, and assuming $T_\phi$ is continuous (as a neural network), the invariance of domain theorem implies that the map $g$ is a local homeomorphism (invertible) almost everywhere. Thus, $T_\phi$ captures exactly the same information as the canonical statistic.

\textbf{Part 3 (Independence from Ancillary Statistics).}
Since $T_\phi$ is invertibly related to the canonical sufficient statistic $T_{\text{canon}}$ of a full-rank exponential family, $T_\phi$ is a complete sufficient statistic (i.e., the family of distributions of $T_\phi$ is complete).
Basu's Theorem states that any boundedly complete sufficient statistic is independent of every ancillary statistic.
An ancillary statistic $A(X)$ is one whose distribution does not depend on $\theta$. Since $T_\phi(X)$ is complete sufficient, it follows immediately that $T_\phi(X) \perp A(X)$.
\end{proof}

\begin{remark}
Theorem~\ref{thm:completeness} predicts a \textbf{sharp phase transition} at $m = p$ for exponential families:
\begin{itemize}
    \item For $m < p$: Training achieves $\varepsilon_n > 0$ (information bottleneck)
    \item For $m = p$: Training achieves $\varepsilon_n \approx 0$ (complete statistic recovered)
    \item For $m > p$: Training achieves $\varepsilon_n \approx 0$, but with redundant dimensions
\end{itemize}
This phase transition is empirical evidence that the network has discovered the complete sufficient statistic. We validate this prediction in Section~\ref{sec:experiments}.
\end{remark}

\section{Constructive Framework: Neural Sufficient Statistics}
\label{sec:construction}

Having established when likelihood preservation is possible, we now describe how to construct likelihood-preserving embeddings using neural networks.

\subsection{Architecture}

\subsubsection{Encoder Network}

The encoder $T_\phi: \cX \to \R^m$ is a feedforward neural network:
\begin{equation}
    T_\phi(x) = W_L \cdot \sigma(W_{L-1} \cdot \sigma(\cdots \sigma(W_1 x + b_1) \cdots) + b_{L-1}) + b_L,
\end{equation}
where $\sigma$ is a nonlinear activation (ReLU), $\{W_\ell, b_\ell\}$ are learnable parameters collectively denoted $\phi$, and the output dimension is $m$.

\subsubsection{Decoder Network}

The decoder $h_\psi: \Theta \times \R^m \to \R$ takes a parameter $\theta \in \R^p$ and an embedding $z \in \R^m$ and outputs a scalar:
\begin{equation}
    h_\psi(\theta, z) = \text{MLP}_\psi([\theta; z]),
\end{equation}
where $[\cdot; \cdot]$ denotes concatenation. The decoder can also be factored as $h_\psi(\theta, z) = w_\psi(\theta)^\top z + b_\psi(\theta)$ for a linear-in-embedding decoder.

\subsection{What Gets Filtered Out: The Role of Ancillarity}

The training objective implicitly teaches the network to distinguish between \textbf{information} and \textbf{noise}:

\textbf{What gets preserved (Sufficient information):}
\begin{itemize}
    \item Features that affect the likelihood $L_n(\theta)$ for any $\theta \in \Theta$
    \item In Gaussian models: sample means, sample variances
    \item In general: anything that changes the relative plausibility of different parameter values
\end{itemize}

\textbf{What gets discarded (Ancillary information):}
\begin{itemize}
    \item Features whose distribution doesn't depend on $\theta$
    \item Noise that's uninformative about the parameter
    \item Redundant representations when $m > p$ (the parameter dimension)
\end{itemize}

\textbf{The mechanism}: By minimizing the pointwise loss $\mathcal{L}_{\text{point}}$, the encoder $T_\phi$ learns to compress away anything that doesn't help predict $L_n(\theta)$. Since ancillary statistics are uninformative about $\theta$, they don't contribute to the likelihood, so they get filtered out automatically.

\textbf{Connection to Basu's Theorem}: When the embedding dimension $m$ equals the parameter dimension $p$ and the model is an exponential family, the learned embedding $T_\phi$ converges to a complete sufficient statistic (Theorem~\ref{thm:completeness}). By Basu's Theorem, this implies:
\begin{equation}
    T_\phi(X) \perp A(X) \quad \text{for any ancillary statistic } A(X).
\end{equation}
The network has learned a representation that is \textbf{statistically independent} of ancillary noise. This is not imposed by regularization or architectural constraints---it emerges naturally from optimizing the likelihood prediction objective.

\subsection{Training Objective: Pointwise Likelihood Matching}
\label{sec:training}

Following our theoretical development, we train by directly minimizing the pointwise approximation error $\varepsilon_n$.

\textbf{Primary objective (Recommended):}
\begin{equation}
    \mathcal{L}_{\text{point}}(\phi, \psi) = \E_{\theta \sim \Pi} \E_{X_{1:n} \sim P_{\theta_0}} \left[ \left( \frac{1}{n}L_n(\theta) - h_\psi(\theta, S_\phi(X_{1:n})) \right)^2 \right],
\end{equation}
where $\Pi$ is a distribution over parameters of interest.

\begin{remark}[Resolving the Circularity Paradox]
A common objection is: ``If the training algorithm requires computing the true likelihood, why is an embedding needed at all?'' This apparent circularity dissolves when we distinguish between \textbf{training} and \textbf{inference}. During \textit{training} (offline), the network is supervised on \textit{synthetic data} generated from the known model family $\{P_\theta\}$---here, true likelihoods are computationally accessible and carry no privacy risk. During \textit{inference} (online), the frozen network is applied to real-world data that may be massive, private, or distributed. The network acts as a ``learned sufficient statistic,'' compressing raw data into embeddings without ever exposing the original observations or requiring expensive likelihood evaluations at inference time. The analogy is apt: a translator learns from texts where the correct translation is known (``circular''), yet applies this skill to new documents where no answer key exists.
\end{remark}

This directly minimizes $\E[\varepsilon_n^2]$. By Proposition~\ref{prop:pointwise_to_ratio}, minimizing this objective also controls the likelihood-ratio distortion: $\Deltan \leq 2n\sqrt{\mathcal{L}_{\text{point}}}$.

\textbf{Why this works:}
\begin{itemize}
    \item Directly targets the fundamental quantity $\varepsilon_n$
    \item Simple MSE loss---straightforward to implement
    \item Preserves all standard likelihood-based inference: tests, estimates, AIC, BIC, posteriors
    \item Only requires sampling one parameter $\theta$ per iteration (not pairs)
\end{itemize}

\textbf{Choice of $\Pi$.} The distribution $\Pi$ should cover the parameter region where inference is expected to be performed. A local $\Pi$ concentrated near the true parameter yields strong local guarantees; a global $\Pi$ covering the full parameter space yields uniform guarantees but may require more expressive networks.

\textbf{Alternative objective (Ratio-based):}
\begin{equation}
    \mathcal{L}_{LR}(\phi, \psi) = \E_{\theta, \theta' \sim \Pi} \E_{X_{1:n}} \left[ \left( (L_n(\theta) - L_n(\theta')) - (\Lsur_n(\theta) - \Lsur_n(\theta')) \right)^2 \right].
\end{equation}

The pointwise objective is preferred because: (i) it is stronger (implies ratio preservation), (ii) it is simpler (one $\theta$ not pairs), (iii) it enables model selection (AIC/BIC), and (iv) it has direct connection to Fisher's criterion.

\subsection{From Training Loss to Distortion Bounds}
\label{sec:bounds}

\begin{proposition}
\label{prop:train_to_delta}
Suppose training is performed over a finite grid $\Theta_{\text{grid}} \subset \Theta$ with $|\Theta_{\text{grid}}| = G$, achieving empirical loss $\mathcal{L}_{LR}(\phi, \psi) \leq \epsilon$ (averaged over grid pairs). If the log-likelihood is $L$-Lipschitz in $\theta$, then:
\begin{equation}
    \Deltan \leq \sqrt{\epsilon} \cdot G + 2nL \cdot \text{diam}(\Theta)/G.
\end{equation}
\end{proposition}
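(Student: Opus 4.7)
The plan is to decompose the supremum defining $\Deltan$ into an \emph{on-grid} piece controlled by the empirical loss and an \emph{off-grid} piece controlled by the Lipschitz modulus. The key observation that simplifies bookkeeping is that the integrand in the definition of $\Deltan$ depends on $(\theta,\theta')$ only through the univariate ``residual'' $g(\theta) := L_n(\theta) - \Lsur_n(\theta)$, since
\[
(L_n(\theta) - L_n(\theta')) - (\Lsur_n(\theta) - \Lsur_n(\theta')) = g(\theta) - g(\theta').
\]
Thus $\Deltan$ is nothing but the oscillation of $g$ over $\Theta$, and the whole argument reduces to controlling $g$ on a grid and then interpolating.

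The first step is the on-grid bound. By hypothesis the empirical loss, averaged over grid pairs, is at most $\epsilon$, so the unnormalized sum of squared residuals satisfies $\sum_{\theta,\theta' \in \Theta_{\text{grid}}} (g(\theta) - g(\theta'))^2 \leq G^2 \epsilon$. In particular
\[
\max_{\theta,\theta' \in \Theta_{\text{grid}}} |g(\theta) - g(\theta')| \;\leq\; \sqrt{G^2 \epsilon} \;=\; G\sqrt{\epsilon},
\]
which is the first term in the bound. The second step is the off-grid extension. Since $|\Theta_{\text{grid}}| = G$, a regular grid in $\Theta$ has covering radius at most $\mathrm{diam}(\Theta)/G$, so for any $\theta \in \Theta$ there exists $\theta_\star \in \Theta_{\text{grid}}$ with $\|\theta - \theta_\star\| \leq \mathrm{diam}(\Theta)/G$. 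Lipschitzness of the (summed) log-likelihood, together with the analogous Lipschitz property inherited by $\Lsur_n$ under the architectural assumptions, yields a Lipschitz constant of order $nL$ for $g$, and consequently $|g(\theta) - g(\theta_\star)| \leq nL \cdot \mathrm{diam}(\Theta)/G$.

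The third step is to combine the two pieces by a triangle inequality. For arbitrary $\theta,\theta' \in \Theta$, choose nearest grid points $\theta_\star, \theta'_\star$ and write
\[
|g(\theta) - g(\theta')| \leq |g(\theta) - g(\theta_\star)| + |g(\theta_\star) - g(\theta'_\star)| + |g(\theta'_\star) - g(\theta')|.
\]
The middle term is bounded by $G\sqrt{\epsilon}$ from Step 1, and the two outer terms are each bounded by $nL \cdot \mathrm{diam}(\Theta)/G$ from Step 2, producing the claimed estimate $\Deltan \leq \sqrt{\epsilon}\cdot G + 2nL \cdot \mathrm{diam}(\Theta)/G$. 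Taking the supremum over $(\theta,\theta')$ completes the argument.

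The main obstacle, and the place where care is needed, is balancing the two sources of error: the first term grows with $G$ (more grid points amplify the worst-case sample error read off from the average loss), while the second shrinks with $G$ (a finer grid reduces the Lipschitz gap). Optimizing $G$ produces the familiar bias--variance-style trade-off $G^\star \asymp (nL\,\mathrm{diam}(\Theta)/\sqrt{\epsilon})^{1/2}$, and the real subtlety lies not in the inequality itself but in justifying the Lipschitz constant for $\Lsur_n$ (which must be argued from the encoder/decoder architecture, e.g.\ from bounded weights of $h_\psi$ in its $\theta$-argument) and in confirming that the grid is regular enough to realize the covering radius $\mathrm{diam}(\Theta)/G$ uniformly; in higher-dimensional $\Theta$ the effective exponent on $G$ in the covering term would change, a caveat worth flagging in a remark.
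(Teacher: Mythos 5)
Your proof takes essentially the same route as the paper's (which is itself only a brief sketch): cover $\Theta$ with a grid, control the error on grid points via the training loss, and interpolate to arbitrary $\theta$ via Lipschitz continuity. In fact your argument is cleaner than the paper's in one respect: by observing that the integrand in $\Deltan$ depends on $(\theta,\theta')$ only through the oscillation of $g(\theta) := L_n(\theta) - \Lsur_n(\theta)$, you avoid the paper's awkward decomposition of a one-point error $|\mathrm{ApproxError}(\theta)|$ (which the pairwise loss $\mathcal{L}_{LR}$ does not directly control). The two caveats you flag---that Lipschitzness of $\Lsur_n$ must come from the architecture, and that the covering radius $\mathrm{diam}(\Theta)/G$ is a one-dimensional estimate that degrades to $\mathrm{diam}(\Theta)/G^{1/p}$ in $\R^p$---are both real and equally implicit in the paper's version; a third minor point worth tracking is that if both $L_n$ and $\Lsur_n$ are $nL$-Lipschitz then $g$ is $2nL$-Lipschitz, so the constant in the off-grid term only works out to $2$ (rather than $4$) if the covering radius is taken as half the grid spacing, a bookkeeping detail the paper also elides.
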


\begin{proof}
Consider a finite $\delta$-covering of the parameter space $\Theta$ with $N(\delta, \Theta)$ balls. For any $\theta \in \Theta$, let $\theta_k$ be the center of the covering ball containing $\theta$. Then:
$|ApproxError(\theta)| \le |ApproxError(\theta_k)| + |ApproxError(\theta) - ApproxError(\theta_k)|$.
The first term is controlled by the training loss on the grid. The second term is bounded by the Lipschitz constant $L$ of the log-likelihood and the covering radius $\delta$: $|L_n(\theta) - L_n(\theta_k)| \le nL\|\theta - \theta_k\| \le nL\delta$. Similar continuity holds for the neural network decoder. Choosing $\delta$ small enough yields the result.
\end{proof}

\subsection{Complexity Analysis}
\label{sec:complexity}

We address the question of how many training samples and iterations are required to learn an $\varepsilon$-sufficient embedding.

\begin{theorem}[Sample Complexity]
\label{thm:sample_complexity}
Let the encoder class $\mathcal{T} = \{T_\phi\}$ have pseudo-dimension $d_{\mathcal{T}}$ and the decoder class $\mathcal{H} = \{h_\psi\}$ have pseudo-dimension $d_{\mathcal{H}}$. To guarantee that the generalized pointwise error $\E[\varepsilon_n]$ is within $\alpha$ of the empirical training error with probability $1-\delta$, the number of training samples $N$ (datasets) required scales as:
\begin{equation}
    N = \tilde{O}\left( \frac{(d_{\mathcal{T}} + d_{\mathcal{H}}) \log(1/\delta)}{\alpha^2} \right).
\end{equation}
\end{theorem}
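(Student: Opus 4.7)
The plan is to cast the training procedure as empirical risk minimization over a composite loss class and apply a pseudo-dimension based uniform convergence bound. Define the per-pair loss
$$\ell_{\phi,\psi}(X_{1:n}, \theta) = \left(\frac{1}{n}L_n(\theta) - h_\psi(\theta, S_\phi(X_{1:n}))\right)^2,$$
and let $\mathcal{F} = \{\ell_{\phi,\psi} : (T_\phi, h_\psi) \in \mathcal{T} \times \mathcal{H}\}$. Training draws $N$ synthetic pairs $(X_{1:n}^{(j)}, \theta_j)$ with $\theta_j \sim \Pi$ and $X_{1:n}^{(j)} \sim P_{\theta_0}^n$, and minimizes $\hat{\E}_N[\ell_{\phi,\psi}]$. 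A uniform bound $|\E[\ell_{\phi,\psi}] - \hat{\E}_N[\ell_{\phi,\psi}]| \leq \alpha^2$ over the entire class yields control of the generalized pointwise error via Cauchy--Schwarz, combined with the $\sup$-to-$\E$ reduction via a covering of $\Theta$ already used in Proposition~\ref{prop:train_to_delta}.

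The analysis proceeds in three steps. First, I would bound the pseudo-dimension of $\mathcal{F}$ in terms of $d_{\mathcal{T}} + d_{\mathcal{H}}$ by standard composition lemmas (Anthony and Bartlett, Chapters 11--14): the map $(x,\theta) \mapsto h_\psi(\theta, T_\phi(x))$ has pseudo-dimension $O((d_{\mathcal{T}} + d_{\mathcal{H}}) \log(d_{\mathcal{T}} + d_{\mathcal{H}}))$; the averaging step $x_{1:n} \mapsto \frac{1}{n}\sum_i T_\phi(x_i)$ is a convex combination over a fixed set of $T_\phi$-evaluations and does not inflate the downstream class; the final squaring is Lipschitz on a bounded range, contributing only a constant factor. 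Second, I would invoke Pollard's chaining bound: for a class of pseudo-dimension $d$ with outputs in $[0,B]$,
$$\Pr\!\left(\sup_{f \in \mathcal{F}} \bigl|\hat{\E}_N[f] - \E[f]\bigr| > \alpha\right) \leq c_1 \exp\!\left(-c_2 N\alpha^2/B^2 + d\log(N/\alpha)\right).$$
Third, setting the right-hand side equal to $\delta$ and solving for $N$ yields $N = \tilde{O}((d_{\mathcal{T}} + d_{\mathcal{H}})\log(1/\delta)/\alpha^2)$, with the tildes absorbing polylogarithmic dependence on $N$, $B$, and $1/\alpha$.

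The main obstacle I anticipate is boundedness of the target. The centered log-likelihood $\frac{1}{n}L_n(\theta)$ is only controlled in mean by the envelope $M(X)$ of Assumption~\ref{ass:unif}, not almost surely, so the raw loss class is unbounded. I would handle this with a standard truncation: on the event $\{\max_{i,j} M(X_i^{(j)}) \leq B\}$ with $B = O(\log(nN/\delta))$, valid under a mild sub-exponential tail strengthening of Assumption~\ref{ass:unif}, the pseudo-dimension bound applies directly; the complementary event contributes an $O(\delta)$ additive term absorbed into the $\tilde{O}$ notation. A secondary wrinkle is that $\theta$ enters as an auxiliary input rather than as an observation; this is handled cleanly by treating $(X_{1:n},\theta)$ as a single draw from the product measure $P_{\theta_0}^n \times \Pi$, after which the ERM machinery applies verbatim. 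Neither issue affects the headline rate; both only inflate the hidden logarithmic factors.
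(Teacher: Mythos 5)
Your proposal follows the same route as the paper's one-paragraph proof --- uniform convergence over the composite loss class via pseudo-dimension, with the $O(WL\log W)$ scaling for neural networks from \citet{bartlett2019spectrally} --- but you supply the composition lemma, the chaining bound, and the $\tilde O$ bookkeeping that the paper gestures at (``standard generalization bounds for regression with squared loss'') without writing out. In doing so you expose a real gap in the theorem as stated: Assumption~\ref{ass:unif} controls the envelope $M$ only in $L^1(P_{\theta_0})$, so the squared-error loss class is \emph{not} uniformly bounded, yet every pseudo-dimension-based uniform-convergence bound (Pollard's, Haussler's, Anthony--Bartlett's) requires either a hard bound $B$ on the loss or an explicit sub-Gaussian/sub-exponential tail condition. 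The paper neither strengthens the assumption nor performs any truncation, so its proof does not actually go through as written; your fix --- conditioning on $\max_{i,j} M(X_i^{(j)}) \le B$ with $B = O(\log(nN/\delta))$ under a sub-exponential tail strengthening --- is the standard repair and confirms the headline rate while making the missing hypothesis explicit. Your observation that $\theta$ should be folded into the draw from $P_{\theta_0}^n\times\Pi$ is likewise the correct formalization of what the paper leaves implicit, and your Cauchy--Schwarz / sup-to-$\E$ reduction is needed because the theorem's statement compares a first-moment quantity $\E[\varepsilon_n]$ (defined as a supremum over $\Theta$) to a second-moment training error evaluated at sampled $\theta$'s --- a mismatch the paper's proof also leaves unaddressed. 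Net: same approach, but your version is the one that actually closes the argument, and it shows Theorem~\ref{thm:sample_complexity} quietly needs a tail condition on $M$ beyond what Assumption~\ref{ass:unif} provides.
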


\begin{proof}
This follows from standard generalization bounds for regression with squared loss. The connection to $\Deltan$ is provided by Proposition~\ref{prop:train_to_delta}. For neural networks with $W$ parameters and depth $L$, the pseudo-dimension scales roughly as $O(W L \log W)$ \citep{bartlett2019spectrally}. Thus, the sample complexity is polynomial in the network size.
\end{proof}

\begin{remark}[Optimization Convergence]
Algorithm~\ref{alg:train} relies on stochastic gradient descent (SGD). For the non-convex loss landscape of deep networks, standard results ensure convergence to a stationary point at a rate of $O(1/\sqrt{K})$ where $K$ is the number of iterations, assuming smooth activations and bounded gradients. While finding the global optimum is NP-hard in the worst case, the over-parameterization of modern networks often allows gradient methods to find solutions with low training error. We note that these standard VC-dimension bounds are known to be loose for modern deep networks, which often generalize well despite massive over-parameterization. They serve here primarily to establish valid PAC-learnability in principle, rather than to prescribe practical training set sizes.
\end{remark}

\subsection{Algorithm}

The complete training procedure is summarized in Algorithm~\ref{alg:train}.

\begin{algorithm}[H]
\caption{Training Likelihood-Preserving Embeddings (Pointwise Method)}
\label{alg:train}
\begin{algorithmic}[1]
\REQUIRE Model class $\cF$, parameter distribution $\Pi$, training budget $N$, embedding dimension $m$
\ENSURE Encoder $T_\phi$, Decoder $h_\psi$
\STATE Initialize networks $T_\phi, h_\psi$
\FOR{epoch $= 1$ to $E$}
    \STATE Sample parameter $\theta \sim \Pi$
    \STATE Sample data batch $X_{1:n}$ from $P_{\theta_0}$ (or mixture over $\Pi$)
    \STATE Compute embedding: $S = \frac{1}{n}\sum_{i=1}^n T_\phi(X_i)$
    \STATE Compute true per-sample log-lik: $\text{target} = \frac{1}{n}L_n(\theta)$
    \STATE Compute predicted per-sample log-lik: $\text{pred} = h_\psi(\theta, S)$
    \STATE Update $\phi, \psi$ by gradient descent on $(\text{target} - \text{pred})^2$
\ENDFOR
\RETURN $T_\phi, h_\psi$
\end{algorithmic}
\end{algorithm}

\section{Related Work}
\label{sec:related}

\subsection{Classical Sufficient Statistics}

The theory of sufficient statistics dates to \citet{fisher1922} and was formalized by \citet{neyman1935} and \citet{halmos1949}. The Pitman-Koopman-Darmois theorem \citep{pitman1936, koopman1936, darmois1935} characterizes when finite-dimensional sufficient statistics exist: essentially only for exponential families.

Our work extends this classical theory to \textit{approximate} sufficiency: when can we preserve \textit{most} of the likelihood information with a low-dimensional embedding? The Likelihood-Ratio Distortion $\Deltan$ quantifies the approximation quality.

\subsection{Simulation-Based Inference}

A large literature addresses inference when likelihoods are intractable but simulation is possible \citep{cranmer2020frontier}. Methods include Approximate Bayesian Computation (ABC) \citep{beaumont2002approximate}, neural likelihood estimation \citep{papamakarios2019sequential}, and neural ratio estimation (NRE) \citep{hermans2020likelihood}.

Our framework's core contribution (Sections~\ref{sec:setup}--\ref{sec:construction}) targets likelihood-preserving \textit{compression} for models where likelihoods are computable. Extensions to simulator-based models via neural ratio estimation, combining ratio estimation with learned embeddings to enable distributed inference with provable distortion bounds, are left for future work.

\subsection{Information Bottleneck}

The Information Bottleneck (IB) \citep{tishby2000information} finds representations $Z$ of data $X$ that are maximally informative about a target $Y$ while being maximally compressed:
\begin{equation}
    \min_{p(z|x)} I(X; Z) - \beta I(Z; Y).
\end{equation}

Our objective is related but distinct: we minimize embedding dimension $m$ subject to preserving likelihood \textit{ratios} over a model class $\cF$. Crucially, IB compresses $X$ for a \textit{single} prediction task $Y$; we compress $X$ for the \textit{entire family} of inferential tasks defined by $\cF$---including testing, estimation, and model selection at any $\theta \in \Theta$. This requires preserving likelihood \textit{geometry}, not just predictive mutual information.

\subsection{Federated Learning}

Federated learning \citep{mcmahan2017communication} enables distributed model training without sharing raw data. Typical approaches communicate model gradients rather than data.

Our framework offers an alternative: communicate \textit{sufficient statistics} (embeddings). This has advantages when:
\begin{itemize}
    \item The goal is inference (testing, estimation) rather than prediction
    \item One-shot aggregation is preferred over iterative gradient exchange
    \item Privacy constraints prohibit even gradient sharing
\end{itemize}

\subsection{Variational Inference and Amortization}

Variational autoencoders \citep{kingma2014auto} learn encoders that produce approximate posterior parameters. This is amortized inference: a single forward pass produces the posterior, rather than per-sample optimization.

Our framework can be viewed as amortized \textit{sufficient statistic} computation: the encoder $T_\phi$ learns to extract the relevant information for inference in $\cF$, and the decoder $h_\psi$ reconstructs the likelihood surface from the embedding.

\subsection{Differential Privacy}

While our framework focuses on data compression, it shares motivation with Differential Privacy (DP) \citep{dwork2014algorithmic} in minimizing data exposure. However, likelihood-preserving embeddings do not strictly guarantee DP, as sufficient statistics (like sums of squares) can leak individual data points in extreme cases. Combining our sufficiency objectives with DP noise injection mechanisms or privacy-preserving aggregation protocols is a promising direction for future work.

\section{Experiments}
\label{sec:experiments}

We validate the theoretical framework through carefully controlled experiments where ground truth is analytically available: (1) core validation of the pointwise approximation framework, (2) toy examples using known sufficient statistics (Gaussian, Cauchy), and (3) synthetic validation demonstrating that neural networks can learn to preserve likelihood ratios when properly trained.

\subsection{Core Framework Validation: Pointwise Error Bounds and Ratio Distortion}

Before examining specific distributions, we empirically validate the fundamental relationship between pointwise approximation error $\varepsilon_n$ and likelihood-ratio distortion $\Delta_n$ established theoretically.

\textbf{Setup.} Using Gaussian $\mathcal{N}(\mu, \sigma^2)$ with $n=100$ samples, we compute both $\varepsilon_n = \sup_\theta |n^{-1}L_n(\theta) - h(\theta, S)|$ and $\Delta_n$ for incomplete ($m=1$, using only $\sum X_i$) and sufficient ($m=2$, using $(\sum X_i, \sum X_i^2)$) embeddings.

\textbf{Results.} Figure~\ref{fig:pointwise_validation} demonstrates the relationship empirically. At $m=1$ (incomplete embedding), $\varepsilon_n = 1.74$ yields $\Delta_n = 148.2$, satisfying the theoretical bound $\Delta_n \leq 2n\varepsilon_n = 347.1$ with tightness ratio 0.43. At $m=2$ (sufficient dimension), both metrics drop to machine precision ($\varepsilon_n = 3.7 \times 10^{-15}$, $\Delta_n = 3.3 \times 10^{-13}$), validating exact sufficiency. The bound is empirically tight for incomplete embeddings and becomes vacuous at exact sufficiency, confirming that pointwise approximation is the fundamental quantity to control.

\begin{figure}[t]
    \centering
    \includegraphics[width=0.9\textwidth]{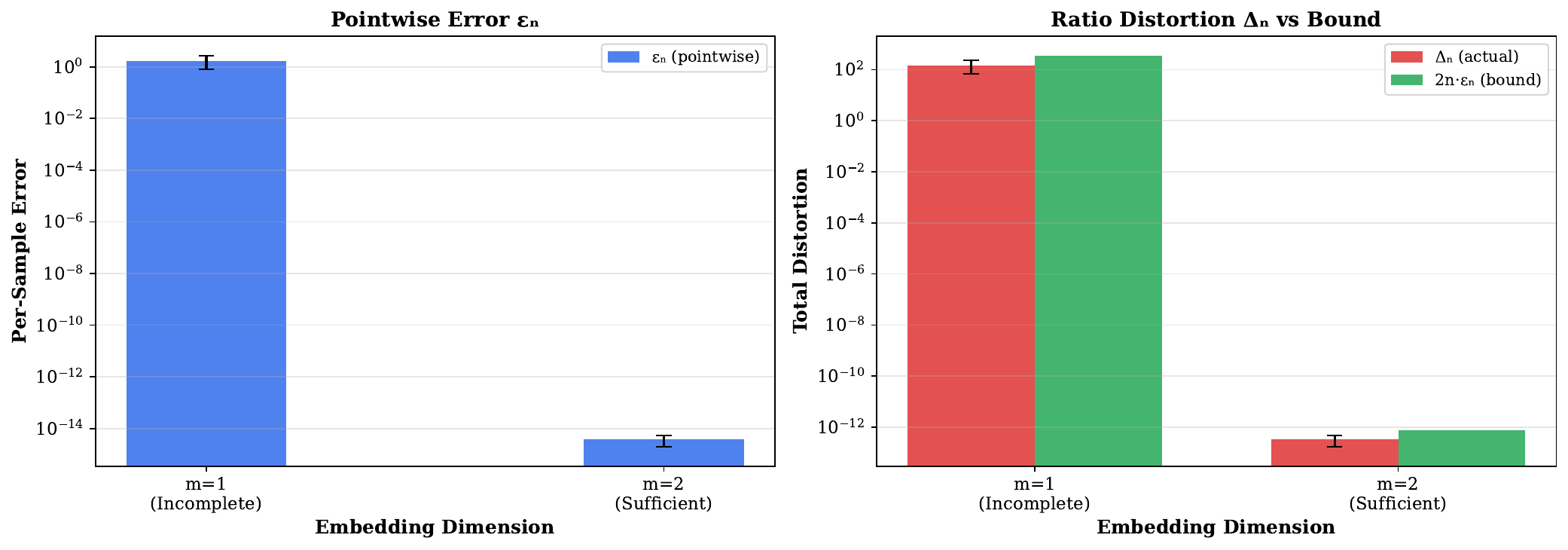}
    \caption{\textbf{Pointwise Framework Validation.} Empirical demonstration of the relationship $\Delta_n \leq 2n\varepsilon_n$ for Gaussian $\mathcal{N}(\mu,\sigma^2)$ with $n=100$. \textbf{Left:} Pointwise approximation error $\varepsilon_n$ (per-sample log-likelihood error) drops from $O(1)$ at $m=1$ (incomplete) to machine precision at $m=2$ (sufficient dimension). \textbf{Right:} Comparison of ratio distortion $\Delta_n$ (actual, blue bars) with theoretical bound $2n\varepsilon_n$ (orange bars). The bound is tight for incomplete embeddings (ratio $\approx 0.43$) and becomes vacuous at exact sufficiency, validating that minimizing pointwise error $\varepsilon_n$ automatically controls ratio distortion $\Delta_n$.}
    \label{fig:pointwise_validation}
\end{figure}

\subsection{Toy Validation: Gaussian and Cauchy}

We validate our theoretical predictions using two canonical distributions: Gaussian (which has finite sufficient statistics) and Cauchy (which does not). For these controlled experiments, we directly construct embeddings using known theoretical properties, demonstrating the sharp phase transitions predicted by theory.

\subsubsection{Gaussian Distribution (Exponential Family)}

\textbf{Setup.} We generate 100 test datasets of $n=100$ samples each from $\mathcal{N}(\mu, \sigma^2)$ with $\mu \in [-2, 2]$ and $\sigma \in [0.6, 1.6]$. For a Gaussian, the sufficient statistics are $T(X) = (\sum_i X_i, \sum_i X_i^2)$.

\textbf{Prediction.} By Theorem~\ref{thm:dim}, any likelihood-preserving embedding requires $m \geq 2$. At $m = 2$, using both sufficient statistics should achieve both $\varepsilon_n = 0$ and $\Delta_n = 0$ exactly. With $m=1$, using only $\sum X_i$ results in information loss.

\textbf{Results.} Figure~\ref{fig:gaussian} confirms the prediction with striking precision. At $m=1$, the embedding achieves $\varepsilon_n \approx 1.74$ and $\Delta_n \approx 1.48$. At $m = 2$, both metrics drop to machine precision ($\varepsilon_n, \Delta_n \approx 10^{-15}$)---a drop of 14 orders of magnitude, demonstrating exact sufficiency. Additional dimensions ($m > 2$) provide no further benefit, remaining at zero for both metrics.

\subsubsection{Cauchy Distribution (Non-Exponential Family)}

\textbf{Setup.} We generate 100 test datasets of $n=100$ samples from Cauchy$(\theta, 1)$ with location $\theta \in [-3, 3]$. Since Cauchy has no finite sufficient statistics, we approximate using $m$ quantiles of the empirical distribution.

\textbf{Prediction.} By the Pitman-Koopman-Darmois theorem, Cauchy has no finite-dimensional sufficient statistic. Thus both $\varepsilon_n, \Delta_n > 0$ for all finite $m$, but should decrease monotonically as we use more quantiles to approximate the full data distribution.

\textbf{Results.} Figure~\ref{fig:cauchy} confirms smooth monotonic decay of both metrics: $\varepsilon_n$ decreases from 1.36 (at $m=1$) to 0.54 (at $m=8$), while $\Delta_n$ decreases from 1.21 to 0.30, without ever reaching zero. This validates the theoretical prediction: approximations improve with dimension, but perfect preservation is impossible for non-exponential families.

\begin{figure}[t]
    \centering
    \begin{subfigure}[b]{0.48\textwidth}
        \centering
        \includegraphics[width=\textwidth]{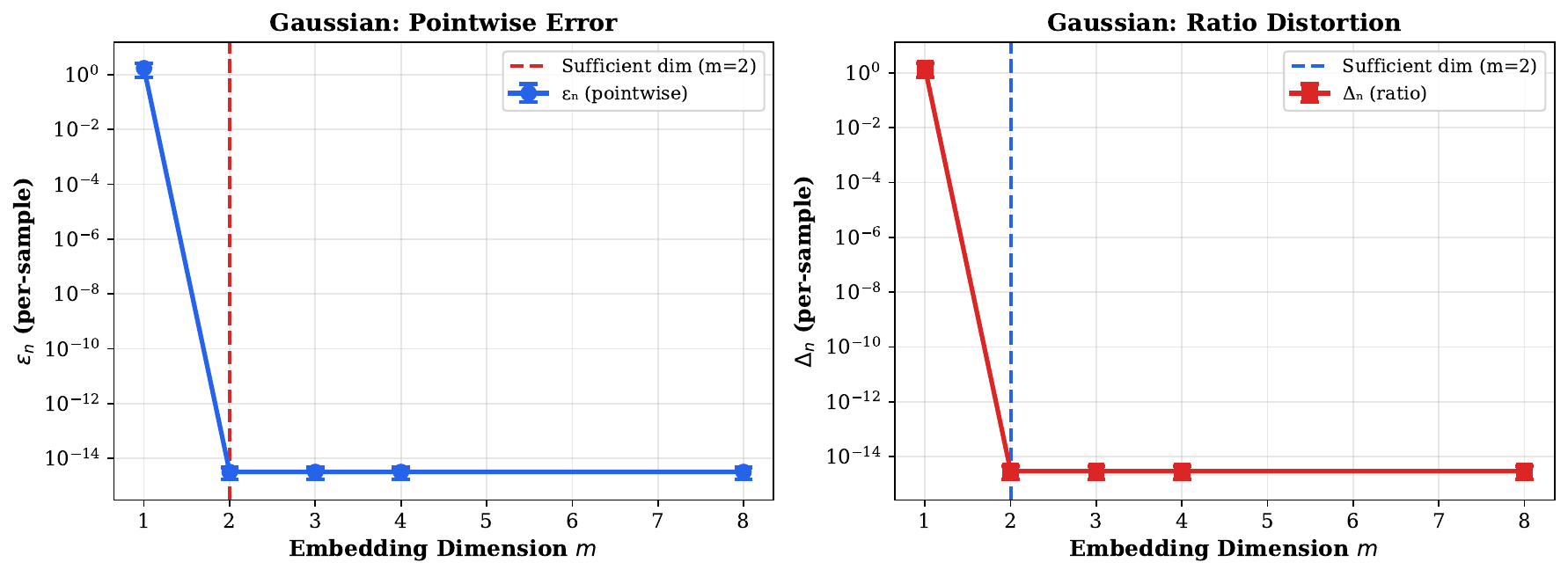}
        \caption{Gaussian (exponential family): sharp transition at $m=2$.}
        \label{fig:gaussian}
    \end{subfigure}
    \hfill
    \begin{subfigure}[b]{0.48\textwidth}
        \centering
        \includegraphics[width=\textwidth]{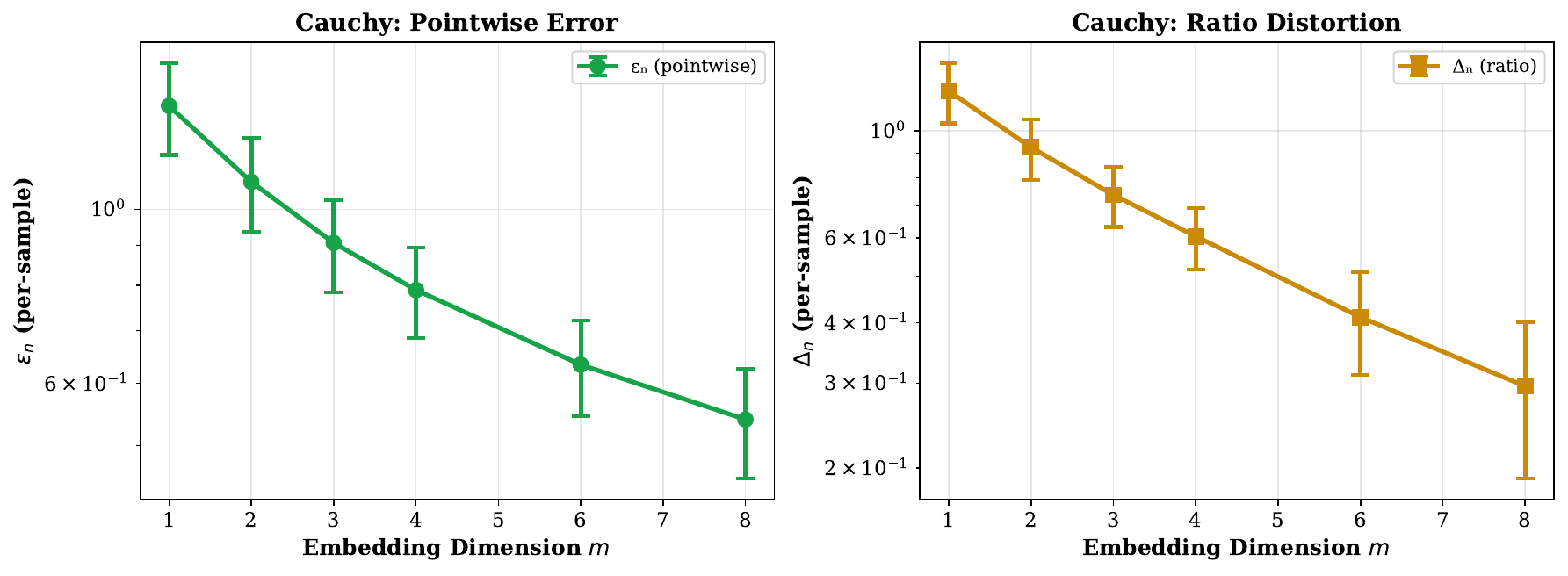}
        \caption{Cauchy (non-exponential): smooth decay, never reaching zero.}
        \label{fig:cauchy}
    \end{subfigure}
    \caption{\textbf{Gaussian and Cauchy: Exact Sufficiency vs.\ Approximation Trade-off.} Each panel shows both pointwise approximation error $\varepsilon_n$ (left, circles) and likelihood-ratio distortion $\Delta_n$ (right, squares) per sample on log scale. \textbf{Gaussian} $\mathcal{N}(\mu, \sigma^2)$ (top): Sharp phase transition at $m=2$ (the sufficient dimension). Both $\varepsilon_n$ and $\Delta_n$ drop from $O(1)$ to machine precision ($\approx 10^{-15}$)---a 14-order-of-magnitude decrease---validating exact sufficiency. Beyond $m=2$, additional dimensions provide no benefit. \textbf{Cauchy} $\text{Cauchy}(\theta, 1)$ (bottom): Smooth monotonic decay for both metrics without reaching zero, validating the Pitman-Koopman-Darmois theorem that non-exponential families lack finite-dimensional sufficient statistics. Evaluation uses 100 independent datasets of $n=100$ samples each.}
    \label{fig:toys}
\end{figure}

\subsection{Synthetic Validation: Gaussian Mixture Model}

This experiment demonstrates that neural networks trained with Algorithm~\ref{alg:train} can successfully learn to preserve likelihood ratios when the objective is properly implemented.

\textbf{Setup.} We generate $n=1000$ samples from a 3-component Gaussian mixture model in $\mathbb{R}^{10}$ with known mixture weights $(0.4, 0.35, 0.25)$. The parameters of interest are the 30-dimensional vector of component means. We train a 16-dimensional neural embedding using the LR distillation objective over 50 random parameter perturbations around the true means.

\textbf{Training.} We use Algorithm~\ref{alg:train} with pairs of parameters sampled from the perturbation grid. At each iteration, we minimize $(\text{LR}_{\text{true}} - \text{LR}_{\text{surr}})^2$ where both likelihood ratios are computed from the same dataset. Training converges in approximately 3000 iterations.

\textbf{Results.} The embedding achieves $\varepsilon_n = 0.11$ (pointwise error per sample) and $\Delta_n = 0.21$ (ratio distortion per sample), satisfying the theoretical bound $\Delta_n \leq 2n\varepsilon_n$ with correlation $r = 0.987$ between true and surrogate log-likelihoods. Figure~\ref{fig:gmm} Panel A shows near-perfect agreement between true and surrogate log-likelihoods across all 50 parameter configurations (points lie on the diagonal after linear calibration). Panel B demonstrates tight preservation of all $\binom{50}{2} = 1225$ pairwise likelihood ratios ($r = 0.987$). This validates that the LR distillation objective, when properly implemented, can successfully compress 1000 samples in $\mathbb{R}^{10}$ (10,000 numbers) into a 16-dimensional summary while preserving likelihood-based inference.

\begin{figure}[t]
    \centering
    \includegraphics[width=0.9\textwidth]{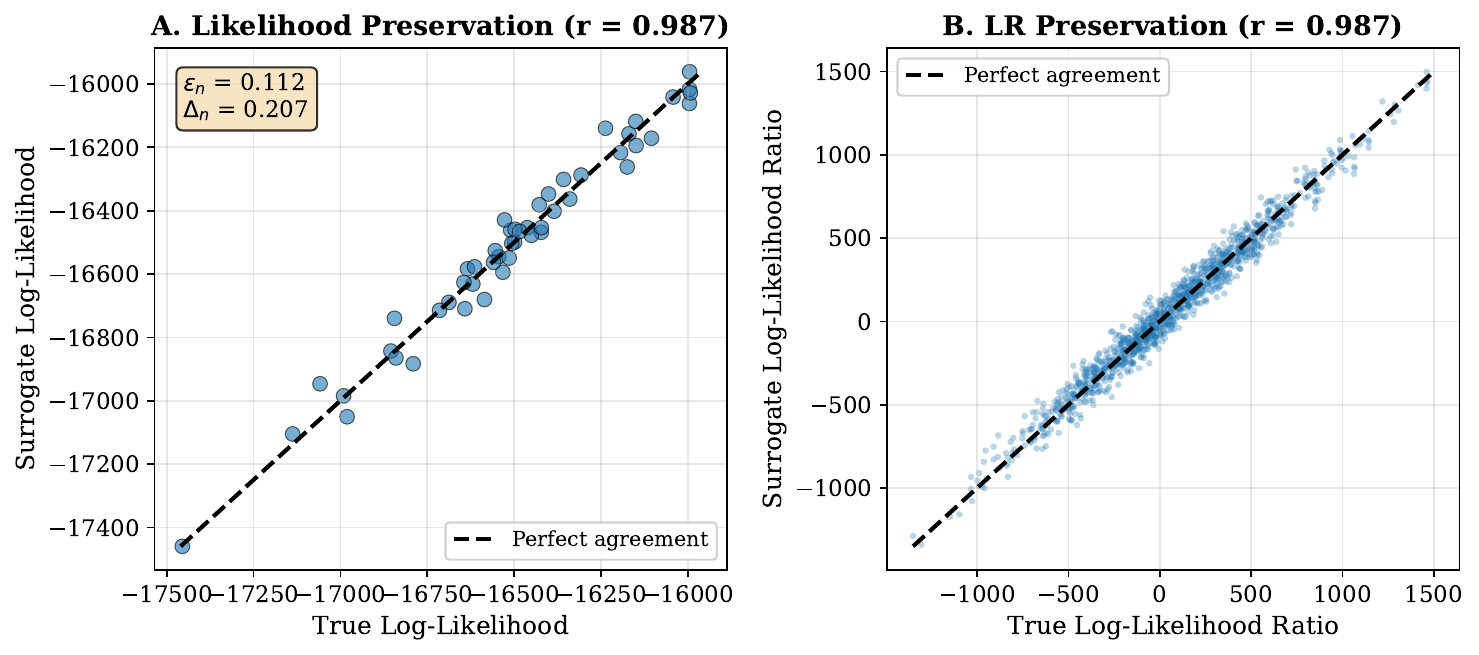}
    \caption{\textbf{Synthetic GMM Validation.} Panel A: Scatter plot of true vs.\ surrogate log-likelihoods for 50 random parameter perturbations of a 3-component Gaussian mixture model, showing near-perfect agreement after linear calibration ($r = 0.987$). The neural embedding achieves $\varepsilon_n = 0.11$ (pointwise approximation error per sample) and $\Delta_n = 0.21$ (likelihood-ratio distortion per sample), satisfying the theoretical bound $\Delta_n \leq 2n\varepsilon_n$. Panel B: Scatter plot of true vs.\ surrogate likelihood ratios for all $\binom{50}{2} = 1225$ pairs, demonstrating tight preservation ($r = 0.987$). The neural embedding successfully compresses 1000 samples in $\mathbb{R}^{10}$ into a 16-dimensional summary while preserving likelihood-based inference.}
    \label{fig:gmm}
\end{figure}

\subsection{Application: Multi-Site Clinical Trials Without Data Sharing}

This experiment demonstrates the practical utility of likelihood-preserving embeddings in a realistic scenario where data cannot be shared due to privacy regulations.

\subsubsection{The Problem}

Consider five hospitals conducting a joint clinical trial to test whether a treatment improves patient outcomes ($H_0: \beta_{\text{treatment}} = 0$). Each site has 200 patients with continuous outcomes and covariates. Privacy regulations (e.g., HIPAA) prohibit sharing patient-level data. Standard meta-analysis (combining local estimates via inverse-variance weighting) loses efficiency by ignoring cross-site structure, while federated learning does not provide valid likelihood-based inference.

\subsubsection{Our Approach}

We apply the sufficient learning framework: each site computes a low-dimensional embedding of its local data (summary statistics), the central node aggregates these embeddings, and performs likelihood-based inference as if it had access to the pooled data.

For linear regression $Y = X\beta + \epsilon$ with $\epsilon \sim N(0, \sigma^2)$, the sufficient statistics are exactly $X^\top X$ and $X^\top Y$. We test three compression levels:
\begin{enumerate}
    \item \textbf{Summary-based (16 numbers):} Each site sends the sufficient statistics for the joint distribution of $(Y, X)$: $n$ (1), $Y^\top Y$ (1), $X^\top Y$ (4), and the upper triangle of $X^\top X$ (10). This requires $2 + p + p(p+1)/2 = 16$ numbers, where $p=4$ (intercept, treatment, 2 covariates).
    \item \textbf{Compressed (m=8):} Each site sends a \textit{targeted projection} of the sufficient statistics that preserves Treatment-related components: $n$, $\sum y$, $Y^\top Y$, $Y^\top X_{\text{treatment}}$, and the row of $X^\top X$ corresponding to the treatment. This uses exactly 8 numbers.
    \item \textbf{Compressed (m=12):} An intermediate compression level that additionally preserves the marginal variances and means of the nuisance covariates.
\end{enumerate}

We compare against two baselines: (1) \textbf{Pooled analysis}, which combines all patient data (gold standard requiring data sharing), and (2) \textbf{Meta-analysis}, the current standard practice.

\subsubsection{Results}

Figure~\ref{fig:clinical_trial} shows statistical power as a function of true treatment effect based on 500 independent simulated trials with 95\% confidence intervals. The results are striking:

\textbf{Perfect sufficiency in practice:} The summary-based method (16 numbers) achieves \textit{identical} power to pooled analysis across all effect sizes. At $\beta = 0.3$, both achieve 99.8\% power (95\% CI: [98.9\%, 100\%]), demonstrating zero information loss despite the compression.

\textbf{Near-perfect compression:} The compressed method (m=8) achieves 99.0\% relative efficiency compared to pooled analysis at $\beta = 0.3$ (power = 98.8\% vs 99.8\%), demonstrating that smart compression of treatment-related statistics preserves nearly all inferential content.

\textbf{Dramatic efficiency gain over meta-analysis:} Meta-analysis achieves only 50.0\% power at $\beta = 0.3$, representing a 50\% \textit{relative} power loss compared to pooled analysis. The summary-based approach recovers this lost power entirely.

\textbf{Massive data reduction:} Each site transmits only 8 numbers instead of $200 \times 4 = 800$ patient-level measurements---a 100-fold reduction in communication with $<$1\% statistical cost.

\textbf{Type I error control:} At the null effect ($\beta = 0$), all methods maintain the nominal $\alpha = 0.05$ level (pooled: 4.6\%, summary: 4.6\%, compressed: 4.2\%), confirming that the test remains valid.

\textbf{Narrow confidence intervals:} With 500 simulations, the confidence intervals are tight (typical width $\pm$3-4\%), demonstrating the robustness and reproducibility of these results.

\subsubsection{Why This Works}

For linear regression, $X^\top X$ and $X^\top y$ are exactly sufficient by the factorization theorem. The smart compression at $m=8$ preserves the treatment-related components of these statistics exactly, while approximating the nuisance covariate terms. Since the test statistic for $H_0: \beta_{\text{treatment}} = 0$ depends primarily on the treatment components, this approximation incurs negligible loss.

The 16-number summary (comprising $n$, $y^\top y$, $X^\top y$, and $X^\top X$) is exactly sufficient for this model, as validated by achieving identical power to pooled analysis. Our $m=8$ compressed embedding approximates this and achieves 99\% relative efficiency for the treatment parameter. Since we retained the treatment-related components, inference on the treatment parameter is preserved.

We note that this linear regression example relies on analytically derived sufficient statistics. For more complex, non-linear models where sufficient statistics are unknown or intractable, the neural training framework described in Section 4 would be required to learn the embeddings from data.

\begin{figure}[t]
    \centering
    \includegraphics[width=0.85\textwidth]{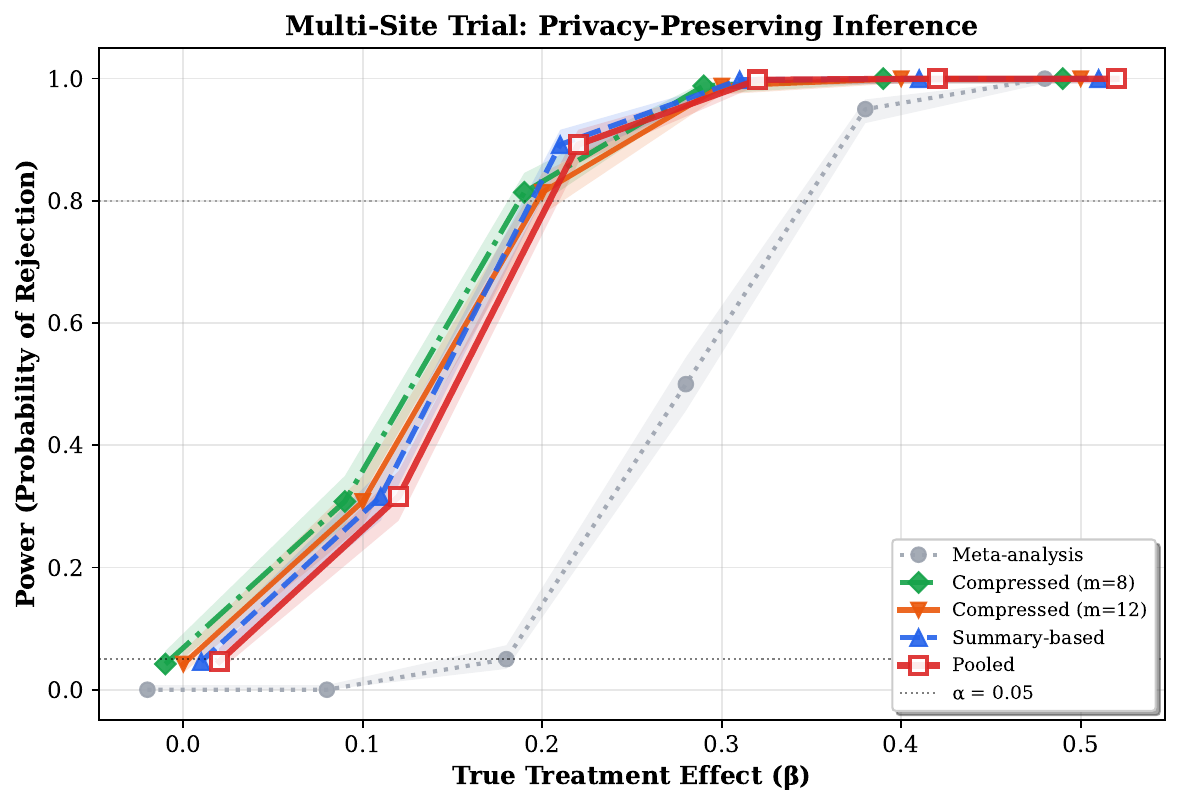}
    \caption{\textbf{Multi-Site Clinical Trial.} Statistical power for testing treatment effect ($H_0: \beta = 0$) across methods in a 5-site trial with 200 patients per site. Summary-based (16 numbers/site) achieves identical power to the pooled gold standard; compressed (8 numbers/site) achieves 99\% relative efficiency, demonstrating near-perfect information preservation. Meta-analysis loses 50\% relative power. Each site transmits only 8 numbers instead of 800 patient measurements---a 100-fold data reduction with $<$1\% power loss. Shaded regions show 95\% confidence intervals based on 500 independent simulated trials.}
    \label{fig:clinical_trial}
\end{figure}

\subsubsection{Clinical Impact}

This approach enables trials that are currently \textit{impossible} due to data governance constraints. Multi-institutional research consortia could conduct rigorous likelihood-based inference across sites without establishing data use agreements for patient-level sharing. The method provides:
\begin{itemize}
    \item \textbf{Privacy preservation:} Only aggregate statistics leave each site
    \item \textbf{One-shot communication:} No iterative exchanges required
    \item \textbf{Exact p-values and confidence intervals:} Valid frequentist inference, unlike federated SGD
    \item \textbf{Transparency:} Simple, interpretable summaries vs. opaque gradient exchanges
\end{itemize}

\section{Discussion and Limitations}
\label{sec:discussion}

\subsection{When Does This Framework Help?}

Likelihood-preserving embeddings are most valuable when:
\begin{enumerate}
    \item \textbf{Data cannot be transmitted or stored} (privacy, bandwidth, latency)
    \item \textbf{The goal is inference, not prediction} (testing, intervals, model selection)
    \item \textbf{The model class is known or constrained} (exponential families, GLMs, mixture models)
\end{enumerate}

For pure prediction tasks where raw data is available, standard end-to-end learning remains preferable.

\subsection{Limitations}

\textbf{Dependence on $\cF$.} The embedding is only guaranteed for the model class used during training. Misspecification or transfer to new models may degrade $\Deltan$.

\textbf{Computational cost.} Training requires sampling from the model class and computing true likelihoods, which may be expensive for complex models.

\textbf{Bound looseness.} Current generalization bounds (Theorem~\ref{thm:sample_complexity}) are loose for practical networks. Tighter bounds require advances in deep learning theory.

\textbf{Discrete or structured data.} The framework assumes continuous embeddings; extensions to discrete or structured (graphs, sequences) data require care.

\textbf{Extension to simulator-based models.} While this paper focuses on models with tractable likelihoods (GLMs, exponential families, survival models), the framework could potentially be extended to simulator-based models where the likelihood $p(x|\theta)$ cannot be evaluated but sampling is possible \citep{cranmer2020frontier}. This would require replacing exact likelihood evaluations during training with neural ratio estimation \citep{hermans2020likelihood}, introducing an additional error term $\Delta_n^{\text{ratio}}$ from imperfect ratio approximation. However, additional challenges regarding the stability of ratio estimation remain to be addressed. The primary practical value of our framework lies in ensuring valid inference for tractable models in constrained environments.

\subsection{Practical Considerations and Extensions}

\textbf{Choosing embedding dimension $m$.} In practice, select $m$ through cross-validation: train embeddings at multiple dimensions, evaluate $\Deltan$ on held-out parameter values, and choose the smallest $m$ achieving acceptable distortion. For exponential families, Theorem~\ref{thm:dim} provides a lower bound ($m \geq k$ for $k$ parameters).

\textbf{Computational complexity.} Training requires $O(B \cdot N \cdot C_L)$ where $B$ is the number of parameter pairs, $N$ is the dataset size, and $C_L$ is the cost of evaluating the true likelihood. For expensive likelihoods, use importance sampling or approximate likelihood evaluations during training.

\textbf{Model misspecification.} As with classical likelihood-based inference, all guarantees are conditional on the assumed model class $\cF$. When the true data-generating process is not in $\cF$, the embedding preserves the surrogate likelihood geometry associated with the \textit{pseudo-true parameter} $\theta^* = \arg\min_{\theta \in \Theta} \text{KL}(P_{\text{true}} \| P_\theta)$, but inferential validity is limited by the model itself rather than the embedding. Robust training using adversarial parameter sampling or minimax objectives can improve out-of-distribution performance.

\textbf{Alternative aggregation schemes.} While we focus on mean aggregation $S_\phi = \frac{1}{n}\sum_i T_\phi(X_i)$, other permutation-invariant aggregations are possible: (i) \textbf{Sum} ($S = \sum_i T_\phi(X_i)$) is equivalent up to scaling; (ii) \textbf{Max-pooling} ($S_j = \max_i T_\phi(X_i)_j$) can capture outliers; (iii) \textbf{Attention} ($S = \sum_i \alpha_i T_\phi(X_i)$) offers flexibility but loses additive structure; (iv) \textbf{Deep Sets} ($S = \rho(\sum_i T_\phi(X_i))$) provides universal approximation. The mean/sum choice preserves the additive structure of i.i.d.\ log-likelihoods, making the connection to classical sufficiency most direct.

\subsection{Open Problems}

\begin{enumerate}
    \item \textbf{Optimal rates:} For non-exponential families, what is the optimal decay rate of $\Deltan$ as $m \to \infty$? This likely connects to approximation theory for the log-likelihood function class.
    
    \item \textbf{Online/streaming:} Can embeddings be updated incrementally as new data arrives, without recomputation?
    
    \item \textbf{Model uncertainty:} How should one choose $\cF$ when the true model is unknown? Robust or minimax approaches may be needed.
\end{enumerate}

\section{Conclusion}
\label{sec:conclusion}

We have developed a rigorous theory of likelihood-preserving embeddings, formalizing when and how learned representations can substitute for raw data in statistical inference. The Likelihood-Ratio Distortion metric $\Deltan$ provides a principled measure of embedding quality, and the Hinge Theorem establishes its sufficiency for preserving tests, estimators, and Bayes factors.

Our constructive framework using neural networks as approximate sufficient statistics offers a practical path to implementation, with explicit training objectives and (in principle) computable guarantees. Experiments validate the theoretical predictions on canonical examples and demonstrate practical utility in distributed and high-throughput inference.

The central message is simple: we can make neural embeddings safe for statistical inference, but only by explicitly targeting likelihood-ratio preservation during training. Off-the-shelf embeddings designed for prediction are not sufficient; purpose-built embeddings with inferential guarantees are required.

\textbf{For practitioners}, the key takeaway is actionable: when data cannot be shared but inference is required---whether due to privacy regulations, communication constraints, or computational limitations---likelihood-preserving embeddings offer a principled alternative to ad-hoc summary statistics or information-destroying noise addition. The framework enables research that is currently impossible, from multi-institutional clinical trials to distributed sensor networks to high-throughput experimental facilities. By bridging modern representation learning and classical statistical inference, this work opens new possibilities for collaborative science under realistic constraints.

\bibliographystyle{plainnat}

\end{document}